\DeclareMathOperator{\atantwo}{atan2}
\theoremstyle{definition}
\newtheorem{definition}{Definition}
\newtheorem{theorem}{Theorem}
\newtheorem{assumption}{Assumption}
\newtheorem{corollary}{Corollary}
\newtheorem{problem}{Problem}
\newtheorem{remark}{Remark}
\newtheorem{example}{Example}
\newcommand{\blue}[1]{\textcolor{black}{#1}}
\begin{document}

\begin{frontmatter}

\title{Optimal Control of Differentially Flat Systems is Surprisingly Easy\thanksref{footnoteinfo}}

\thanks[footnoteinfo]{This research was supported by NSF under Grants CNS-2149520 and CMMI-2219761.}

\author[ODU]{Logan E. Beaver}\ead{lbeaver@odu.edu},
\author[CORNELL]{Andreas A. Malikopoulos}\ead{amaliko@cornell.edu}

\address[ODU]{Department of Mechanical and Aerospace Engineering, Old Dominion University, Norfolk, VA 23529}
\address[CORNELL]{School of Civil and Environmental Engineering, Cornell University, Ithaca, NY 14853}  

\begin{keyword}
Optimal Control; Differential Flatness; Constrained Optimization; Optimization; Nonlinear Control; 
\end{keyword}  

\begin{abstract}                          
As we move to increasingly complex cyber-physical systems (CPS), new approaches are needed to plan efficient state trajectories in real-time.
In this paper, we propose an approach to significantly reduce the complexity of solving optimal control problems for a class of CPS with nonlinear dynamics.
We exploit the property of differential flatness to simplify the Euler-Lagrange equations that arise during optimization, and this simplification eliminates the numerical instabilities that plague optimal control in general.
We also present an explicit differential equation that describes the evolution of the optimal state trajectory, and we extend our results to consider both the unconstrained and constrained cases.
Furthermore, we demonstrate the performance of our approach by generating the optimal trajectory for a \blue{planar manipulator with two revolute joints}.
We show in simulation that our approach is able to generate the constrained optimal trajectory in \blue{$4.5$ ms while respecting workspace constraints and switching between a `left' and `right' bend in the elbow joint.}
\end{abstract}

\end{frontmatter}

\section{Introduction}

There is an increasing demand to extend the boundaries of autonomy in cyber-physical systems (CPS) using experimental testbeds (see: \cite{Rubenstein2012,jang2019simulation,Beaver2020DemonstrationCity,chalaki2021CSM}) and outdoor experiments (see:  \cite{Vasarhelyi2018OptimizedEnvironments,mahbub2020sae-2,chalaki2021CSM}).
As CPS achieve higher autonomy levels, they will be forced into complicated interactions with other agents and the surrounding environment \citep{Malikopoulos2020,Beaver2020AnFlockingb,Oh2017}.
These autonomous agents must be able to react quickly to their environment and re-plan efficient trajectories.
To this end, we propose a new method to simplify real-time optimal trajectory planning by exploiting differential flatness.

A system is differentially flat if there exist a set of endogenous \emph{flat} variables, also called \emph{outputs}, such that the original state and control variables can be written as an explicit function of the flat variables and a finite number of their derivatives.
This yields an equivalent flat system that is completely described by integrator dynamics.
It is significantly easier to generate control trajectories in the flat space, wherein the trajectories can be exactly mapped back to the original coordinate system.
Differentially flat systems have garnered significant interest since their introduction by \cite{Fliess1995FlatnessExamples}, and it has been shown that generating trajectories in the flat space can reduce computational time by at least an order of magnitude (e.g., see:  \cite{Petit2001InversionOptimization}).
Differentially flat systems are closely related to feedback linearizable systems \citep{Levine2007OnLinearizability}; however, the standard control techniques for flat systems are distinct from feedback linearization.

The overwhelming majority of research on trajectory generation with differential flatness uses collocation techniques, i.e., finding optimal parameters for a set of basis functions in the flat space.
Under this approach, a designer selects an appropriate basis function for their application, e.g., polynomial splines in \cite{Mellinger2011MinimumQuadrotors,Sreenath2013TrajectorySystem}, Bezier curves in \cite{Milam2003Real-TimeSystems}, Fourier transforms in \cite{Ogunbodede2020OptimalSystems}, or piece-wise constant functions in \cite{Kolar2017Time-optimalCrane}.
The parameters of these basis functions are optimally determined to yield the optimal trajectory for the selected basis.
A rigorous overview of this approach is given in the recent textbook by \cite{Sira-Ramirez2018DifferentiallySystems}.

In contrast, we propose an indirect approach that seeks a solution by solving a set of optimality conditions.

Our approach is similar to contemporary methods, such as NOSNOC (see \cite{NOSNOC}) and the Method of Evolving Junctions (MEJ); see \cite{Li2017MethodObstacles}.
Each of these algorithms explicitly resolves the junctions that arise in the optimal control problem.
NOSNOC was developed to solve systems with switched dynamics, and explicitly includes the switching point in its discretization.
Similarly, MEJ has been used for optimal navigation in discrete flow fields (see \cite{Zhai2022MethodFields}), where the boundary between different flow regions are explicitly resolved.
Similarly, our approach generates a collection of optimal trajectory segments between discrete junctions.
However, we determine the optimal junctions using standard root-finding algorithms, whereas NOSNOC discretizes the entire space, and MEJ uses a stochastic global search method.

We also note that the MEJ has primarily been applied to linear systems with quadratic objective functions, e.g., see \cite{Li2017MethodObstacles}.
Furthermore, our approach gives an equation that describes the system's trajectory between junctions, whereas MEJ and NOSNOC give no such construction.

There are also weaker and more general analytical results for the so-called maximal inversion approach by \cite{Chaplais2007InversionCases,Chaplais2008InversionSystems}, which proves that the optimality conditions for a feedback linearizable system can be separated into two parts\textemdash one describing the optimal state trajectory, and the other describing the optimal costate trajectory.
This separation result is significant, as the general optimality conditions couple the evolution of the states and costates, which leads to significant numerical instabilities (see: \cite{Bryson1996Optimal1985}).
While \cite{Chaplais2008InversionSystems} proved that the optimality conditions are  separable, in this paper, we provide the analytical form of the ordinary differential equation that explicitly describes their evolution.
Furthermore, while \cite{Chaplais2008InversionSystems} considers control-affine nonlinear systems, our proposed approach does not require affinity in the control variables.
More recent work following this approach employs saturation functions to handle trajectory constraints, e.g., \cite{Graichen2010HandlingExtension}, whereas our approach explicitly generates constrained optimal trajectories.
Finally, we also derive the optimal boundary conditions in the flat space, which, to the best of our knowledge, has not been addressed in the literature to date.
The contributions of this paper are:
\begin{itemize}
    \item We present a set of ordinary differential equations that describe the evolution of the costates as explicit functions of the state and control variables (Theorem \ref{thm:costates}).
    \item We derive optimality conditions that are independent of the costates (Theorem \ref{thm:main}). This independence property holds for interior-point (Section \ref{sec:interior-const}) and path (Section \ref{sec:trajConst}) constraints.
    \item We derive equivalent boundary conditions for the state and control variables when an initial or final state is left free or when the final time is unknown (Section \ref{sec:boundaries}).
\end{itemize}

The remainder of the article is organized as follows.
In Section \ref{sec:pf}, we provide the modeling framework and enumerate our assumptions before presenting our main theoretical results in Section \ref{sec:main}.
In Section \ref{sec:case-study}, we provide an illustrative example of controlling a nonlinear \blue{planar manipulator, and relate our differential flatness transformations to the forward and inverse kinematics}.
Finally, we draw concluding remarks and present directions of future work in Section \ref{sec:conclusion}.

\section{Problem Formulation} \label{sec:pf}
Consider the nonlinear dynamical system,
\begin{equation} \label{eq:dynamics}
    \dot{\bm{x}}(t) = \bm{f}\big( \bm{x}(t), \bm{u}(t)\big),
\end{equation}
where $\bm{x}(t)\in\mathcal{X}\subset\mathbb{R}^n$ and $\bm{u}(t)\in\mathcal{U}\subset\mathbb{R}^m$, $n \geq m$, are the state and control vectors, respectively, $\bm{f}$ is a smooth vector field, and $t\in\mathbb{R}$ is time.
The system is \emph{differentially flat} if the following definition holds.

\begin{definition}[Adapted from \cite{Rigatos2015DifferentialControl}]\label{def:flat}
A system described by \eqref{eq:dynamics} is said to be differentially flat if there exists a vector of \emph{outputs} $\bm{y}(t) = (y_1(t), \dots, y_m(t)),$ such that:

\begin{enumerate}
\item There exists a smooth function $\sigma$ that maps $\bm{x}(t)$, $\bm{u}(t)$, and a finite number of its derivatives to $\bm{y}$, i.e.,
\begin{align}
    \bm{y}(t) &= \sigma\big(\bm{x}(t), \bm{u}(t), \dot{\bm{u}}(t), \dots, \bm{u}^{(p)}(t)\big), \label{eq:state2flat}
\end{align}
for some $p\in\mathbb{N}$.

\item The variables $\bm{x}(t)$ and $\bm{u}(t)$ can be expressed as smooth functions of $\bm{y}(t)$ and a finite number of its time derivatives, i.e.,
\begin{align}
    \bm{x}(t) &= \gamma_0\big(\bm{y}(t), \dot{\bm{y}}(t), \dots, \bm{y}^{(q)}(t)\big), \label{eq:flat2state}\\
    \bm{u}(t) &= \gamma_1\big(\bm{y}(t), \dot{\bm{y}}(t), \dots, \bm{y}^{(q)}(t) \big), \label{eq:flat2control}
\end{align}
for some $q \in\mathbb{N}$.

\item The vectors $\bm{y}(t), i = 1, \dots, m$ and their time derivatives are differentially independent, i.e., there exists no differential relation satisfying $\eta(\bm{y}, \dot{\bm{y}}, \dots) = 0$ .

\end{enumerate}
Then the variables $y_i(t)$, $i=1, 2, \dots, m$ are the \emph{outputs} of the differentially flat system.
\end{definition}

Definition \ref{def:flat} implies a smooth bijective mappings $\sigma$, $\gamma_0,$ and $\gamma_1$ between the original space, $\mathcal{X}\times\mathcal{U}\times\mathcal{U}^{(1)}\times\dots$, and a flat space $\mathcal{Y}\times\mathcal{Y}^{(1)}\times\dots$.
Furthermore, since this mapping uses only the original state variables and their derivatives, this is said to be an \emph{endogenous} transformation.

For a comprehensive discussion on differential flatness and the topological properties of flat spaces see \cite{Fliess1999ASystems}.
Next, we impose our working assumptions for the analysis of differentially flat systems that satisfy Definition \ref{def:flat}.

\begin{assumption} \label{smp:existence}
The trajectory of the system is contained in an open set where the functions \eqref{eq:state2flat}--\eqref{eq:flat2control} are well-defined.
\end{assumption}

\begin{assumption} \label{smp:bounded}
The control actions in the original and flat spaces are upper and lower bounded.
\end{assumption}

Assumption \ref{smp:existence} is a standard assumption in the literature (see: \cite{VanNieuwstadt1994DifferentialEquivalence}).
It can be relaxed by constraining the trajectory to remain within a subset where \eqref{eq:state2flat}--\eqref{eq:flat2control} are well-defined, and several relaxations of this assumption are discussed in \cite{Milam2003Real-TimeSystems}.

Assumption \ref{smp:bounded} is common in optimal control (see: \cite{Bryson1975AppliedControl}), particularly for physical systems where actuators are ultimately bounded by their physical strength or energy consumption.
This assumption can be relaxed by allowing the control input to take the form of a Dirac delta function, which introduces additional complexity that requires nonsmooth analysis.

\blue{
We note that, for mechanical systems, Assumption \ref{smp:existence} has been proven to hold for a broad class of practical problems.
For example, in the case of robot manipulators the diffeomorphism \eqref{eq:state2flat} is exactly the forward kinematics, and the inverse transformations \eqref{eq:flat2state} and \eqref{eq:flat2control} are exactly the inverse kinematics and inverse dynamics.
While providing an algorithm to determine the inverse kinematics in general is an open problem, the transformations have been derived and tabulated for many systems (see \cite{spong2020robot} for some examples).
We also demonstrate in our example that singularities in the transformations are equivalent to the unconstrained switching points of \cite{Bryson1975AppliedControl}; we treat these as interior point constraints in our case study.
Furthermore, when discontinuities of the first kind appear in these transformations, they can easily be handled by piecing the left and right limits using continuity in the state--which is implied by the differentially flat dynamics and bounded control in Assumption \ref{smp:bounded} \citep{Bryson1975AppliedControl}.
This further motivates our approach, which is robust to these kinds of discontinuities and singularities.
}

Next, as an illustrative example of our approach, we introduce a ``running" example that we will refer back to throughout the manuscript: a unicycle operating in $\mathbbm{R}^2$.
\begin{example}\label{exa:unicycle}
Let $\bm{x}(t) = [p_x(t), p_y(t), \theta(t)]^T$ be the state of a unicycle in the $\mathbb{R}^2$ plane, where $p_x(t)$ and $p_y(t)$ denote the position, and $\theta(t)$ denotes the heading angle. Let $\bm{u}(t) = [u_1(t), u_2(t)]^T$ be the vector of control actions, where $u_1(t)$ and $u_2(t)$ denote the forward and angular velocity, respectively.
Then, the dynamics are given by,
\begin{align}
    \dot{\bm{x}}(t) = 
    \begin{bmatrix}
        u_1(t) \cos\big(\theta(t)\big) \\
        u_1(t) \sin\big(\theta(t)\big) \\
        u_2(t)
    \end{bmatrix}.
\end{align}
This system admits $m=2$ differentially flat base states, $\bm{y}(t) = [y_1(t), y_2(t)]^T = [p_x(t), p_y(t)]^T$ (see \cite{Sira-Ramirez2018DifferentiallySystems}).
The transformations \eqref{eq:flat2state} and \eqref{eq:flat2control} between the flat and original variables are
\begin{align}
    \begin{bmatrix}
        p_x(t) \\ p_y(t) \\ \theta(t)
    \end{bmatrix}
    &=
    \begin{bmatrix}
        y_1(t) \\ y_2(t) \\ \atantwo(\dot{y}_2, \dot{y}_1)
    \end{bmatrix}, \label{eq:gUnicycle}
    \\
    \begin{bmatrix}
        u_1(t) \\ u_2(t)
    \end{bmatrix}
    &= 
    \begin{bmatrix}
        \sqrt{\dot{y}_1(t)^2 + \dot{y}_2(t)^2} \\
        \frac{\ddot{y}_2\dot{y}_1 - \dot{y}_2\ddot{y}_1}{\dot{y}_2^2 + \dot{y}_1^2}
    \end{bmatrix}, \label{eq:hUnicycle}
\end{align}
which satisfy Assumption \ref{smp:existence} for $u_1(t) \neq 0$.
Note that $\atantwo$ is the two-argument inverse tangent with codomain $(-\pi, \pi]$.
\end{example}

Next, we formulate a constrained optimal control problem for a system governed by \eqref{eq:dynamics} under Assumptions \ref{smp:existence} and \ref{smp:bounded}.
\begin{problem} \label{prb:firstProblem}
Consider a differentially flat system \eqref{eq:dynamics} with running cost $L\big(\bm{x}(t), \bm{u}(t)\big)$ over the time horizon $[t^0, t^f]\subset\mathbb{R}$ and a final cost $\phi(\bm{x}(t^f), \bm{u}(t^f))$.
Determine the optimal control input that minimizes the total cost, i.e.,
\begin{align*}
\min_{\bm{u}(t)} ~&~ \phi\big(\bm{x}(t^f), \bm{u}(t^f)\big) + \int_{t^0}^{t^f} L\big(\bm{x}(t), \bm{u}(t)\big) dt \\
\text{subject to: }& \eqref{eq:dynamics}, \\
&\hat{\bm{g}}\big(\bm{x}(t), \bm{u}(t), t\big) \leq 0, \\
\text{given: }& \text{initial conditions}, \text{final conditions},
\end{align*}
where the initial and final states may be fixed, a function of the state variables, or left free.
In addition, the function $\hat{\bm{g}}(\bm{x}(t), \bm{u}(t), t)$ defines a vector of state and control trajectory constraints.
\end{problem}
In what follows, we present our main results, which yield a set of sufficient conditions for optimality that are only dependent on the state and control variables.

\section{Main Results} \label{sec:main}

\begin{figure*}[ht]
    \centering
    \includegraphics[width=0.9\linewidth]{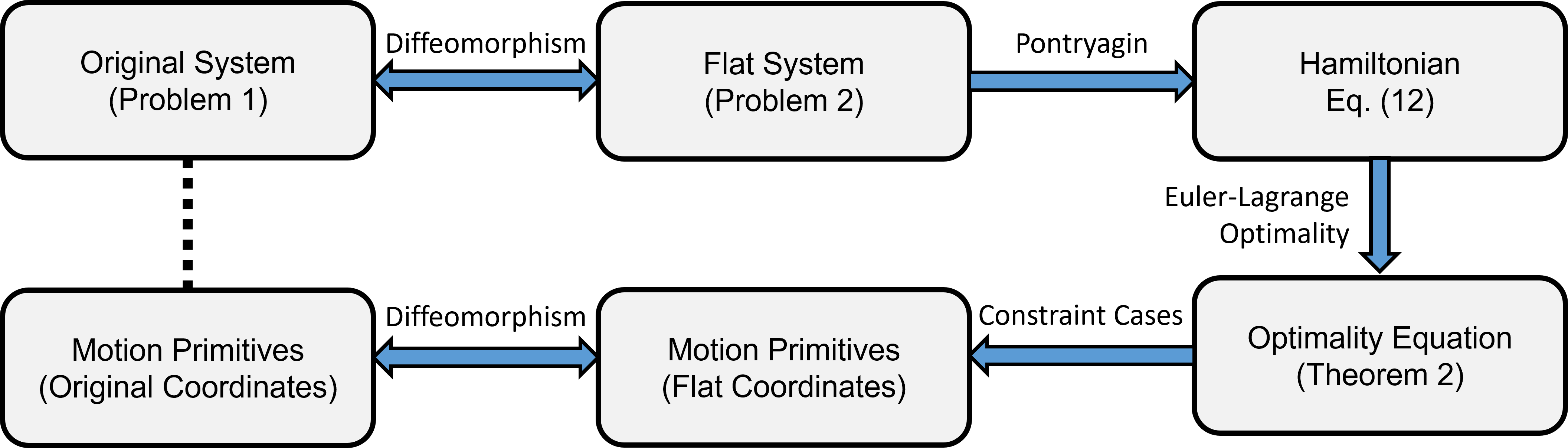}
    \caption{An overview of our proposed approach, showing how the original optimization problem is split into motion primitives in the flat space.
    These are optimally pieced together to generate the optimal trajectory in either space.}
    \label{fig:overview}
\end{figure*}

We generate the optimal solution to Problem \ref{prb:firstProblem} as follows: first, we apply the diffeomorphism of Definition \ref{def:flat} to generate an equivalent problem in the flat space.
Next, we apply Pontryagin's principle to construct the Hamiltonian in the flat space, and apply the Euler-Lagrange and optimality conditions to generate an ordinary differential equation that describes the optimal motion of the system.
We solve the differential equation to generate all possible \emph{motion primitives} that the optimality conditions admit; we achieve this by considering every possible combination of constraints that could become active along the trajectory over a non-zero time interval.
This procedure is similar to exhaustively checking every possible constraint activation in a static optimization problem to guarantee complimentary slackness as part of the KKT conditions \citep{Boyd2004ConvexOptimization}.
Finally, the resulting motion primitives can be passed back through the flatness diffeomorphism to generate the optimal motion primitives in the original coordinate system.
Thus, we generate a collection of optimal motion primitives--in both the original and flat coordinates--that must be pieced together using the optimality conditions to generate the optimal solution to Problem \ref{prb:firstProblem}.
This process is summarized in Fig. \ref{fig:overview}.

\subsection{Separability of the Optimality Conditions} \label{sec:separation}

First, we construct the flat space, which allows us to transform Problem \ref{prb:firstProblem} into an optimization over the differentially flat variables.
Note that the transformations \eqref{eq:flat2state}, \eqref{eq:flat2control}, are a function of $\bm{y} = [y_1, y_2, \dots, y_m]$ and a finite number of their derivatives.
Thus, we perform dynamic extension on each of our $i = 1, 2, \dots, m$ output variables $y_i$ by taking $k_i$ time derivatives.
The value of $k_i$ is the minimum number of derivatives required to span the domain of \eqref{eq:flat2state} and \eqref{eq:flat2control}, and thus it depends explicitly on the diffeomorphism in Definition \ref{def:flat}.
This can be achieved using the dynamic extension algorithm, as detailed in \cite{Benedetto1989RankSystems}.
Using the dynamic extension, we define analogous state and control variables for the system in the flat space.

\begin{definition} \label{def:outputSpace}
Group each output $y_i$, $i = 1, \dots, m$ and their $k_i$ derivatives into the state vector $\bm{s}(t)$ and control vector $\bm{a}(t)$ such that,
\begin{align}
    \bm{s}(t) &= \begin{bmatrix}
        y_1(t),\ \dots,\ y_1^{(k_1-1)}(t),\ \dots,\ y_m^{(k_m-1)}(t)
    \end{bmatrix}^T,\\
    \bm{a}(t) &= \begin{bmatrix}
        y_1^{(k_1)}(t),\ \dots,\ y_m^{(k_m)}(t)
    \end{bmatrix}^T,
\end{align}
and $\bm{s}\times\bm{a} \in \mathcal{Y}\times\mathcal{Y}^{(1)}\times\dots$ span the flat space.
\end{definition}

\begin{remark}
For the unicycle system in Example \ref{exa:unicycle}, the flat state and control variables are
\begin{align}
    \bm{s}(t) &= 
    \begin{bmatrix}
        y_1(t),\ y_2(t),\ \dot{y}_1(t),\ \dot{y}_2(t)
    \end{bmatrix}^T, \\
    \bm{a}(t) &= 
    \begin{bmatrix}
        \ddot{y}_1(t),\ \ddot{y}_2(t)\
    \end{bmatrix}^T,
\end{align}
which consists of two integrator chains, each with a length of $k_i = 2$, for $i = 1, 2$.
\end{remark}

With the flat space completely defined, we apply the mappings (Definition \ref{def:flat}) to construct an equivalent optimal control problem over the flat variables.

\begin{problem}\label{prb:outputProblem}
Find the cost-minimizing trajectory in the flat space,
\begin{align*}
    \min_{\bm{a}(t)} ~&~ \Phi(\bm{s}(t^f), \bm{a}(t^f)) + \int_{t^0}^{t^f} \Psi\big(\bm{s}(t), \bm{a}(t) \big)\, dt \\
    \text{subject to: }& \dot{\bm{s}} = \bm{I}(\bm{s}(t), \bm{a}(t)), \\
    & \bm{g}\big(\bm{s}(t), \bm{a}(t), t\big) \leq 0, \\
    \text{given: }& \text{initial conditions, final conditions},
\end{align*}
where $\bm{I}$ denotes integrator dynamics from Definition \ref{def:outputSpace} in Brunovsky canonical form \citep{Brunovsky1970}, while $\Phi$, $\Psi$, $\bm{g}$, and the boundary conditions are constructed by composing $\phi$, $L$, $\hat{\bm{g}}$ and the boundary conditions of Problem \ref{prb:firstProblem} with the inverse of \eqref{eq:flat2state} and \eqref{eq:flat2control}.
\end{problem}

Under the framework proposed by \cite{Bryson1975AppliedControl}, we write the constraint $\bm{g}$ with explicit dependence on the control action $\bm{a}(t)$.
This is not restrictive on our analysis, and we rigorously prove in Section \ref{sec:trajConst} that, under Assumption \ref{smp:bounded}, any trajectory constraint $\bm{h}(\bm{s}(t), t)$ can be transformed into an explicit function of the control input.
This is achieved by taking successive time derivatives of $\bm{h}(\bm{s}(t), t)$ until any component of the control vector $\bm{a}(t)$ appears; this yields a constraint with explicit functional dependence on the control variable and a set of tangency conditions that must be satisfied.
This technique is similar to the derivation of control barrier functions with high relative degree, as discussed in \cite{Xiao2019ControlDegree}.

Note that solving Problem \ref{prb:outputProblem} yields the optimal solution to Problem \ref{prb:firstProblem} through Definition \ref{def:flat}, and this construction is common in the literature (see \cite{Fliess1995FlatnessExamples,Petit2001InversionOptimization,Milam2003Real-TimeSystems,Ogunbodede2020OptimalSystems} for examples).
We present our first result next, which decouples the state and costates for the Hamiltonian function associated with Problem \ref{prb:outputProblem}.
Note that to simplify the notation, we omit the explicit dependence on $\bm{a}(t)$, $\bm{s}(t)$, and $t$ for the remainder of this Section where it does not lead to ambiguity.

We follow the standard process of \cite{Bryson1975AppliedControl,Ross2015} for solving optimal control problems.
First, we construct the Hamiltonian for Problem \ref{prb:outputProblem},
\begin{align} \label{eq:hamiltonian}
    H = \Psi(\bm{s}(t), \bm{a}(t)) &+ \bm{\lambda}^T(t) \bm{I}(\bm{s}(t), \bm{a}(t)) \notag\\
    &+ \bm{\mu}^T(t)\bm{g}\big(\bm{x}(t), \bm{a}(t), t\big),
\end{align}
where $\bm{\lambda}(t)$ is the vector of costates, $\bm{g}$ is a vector of inequality constraints, and $\bm{\mu}(t)$ is a vector of inequality Lagrange multipliers.
This leads to our first result.

\begin{theorem} \label{thm:costates}

The costates $\lambda^{y_i^{(j)}}$, for each base state $i = 1, 2, \dots, m$ and derivative $j = 0, 1, \dots, k_i - 1$, for Problem \ref{prb:outputProblem} are,
\begin{equation} \label{eq:thm1}
    \lambda^{y_i^{(j)}} = \sum_{n=1}^{k_i - j} (-1)^n \frac{d^{n-1}}{dt^{n-1}} \big( \Psi_{y_i^{(j+n)}} + \bm{\mu}^T \bm{g}_{y_i^{(j+n)}} \big),
\end{equation}
where the $\frac{d}{dt}$ operator is the Cartan field of \cite{Fliess1999ASystems}.
\end{theorem}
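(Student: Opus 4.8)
The plan is to build the control Hamiltonian for Problem \ref{prb:outputProblem}, read off the first-order optimality conditions, and then solve the resulting recursion in closed form by downward induction on the derivative index $j$. Because the flat dynamics $\dot{\bm{s}} = \bm{I}(\bm{s}, \bm{a})$ are pure integrator chains, the Hamiltonian is
\begin{equation*}
H = \Psi + \bm{\mu}^T \bm{g} + \sum_{i=1}^{m} \sum_{j=0}^{k_i - 1} \lambda^{y_i^{(j)}} \, y_i^{(j+1)},
\end{equation*}
where the coupling terms encode $\dot{y}_i^{(j)} = y_i^{(j+1)}$; for $j < k_i - 1$ the factor $y_i^{(j+1)}$ is again a state, while for $j = k_i - 1$ it is the control, $\dot{y}_i^{(k_i-1)} = y_i^{(k_i)} = a_i$. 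Here $\bm{\mu}$ is the multiplier associated with $\bm{g}$, held fixed when forming partial derivatives.

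First I would extract the two families of conditions. Control stationarity $\partial H / \partial a_i = 0$ isolates the top costate of each chain, since $a_i = y_i^{(k_i)}$ enters $H$ only through $\Psi$, $\bm{g}$, and the single coupling term $\lambda^{y_i^{(k_i-1)}} y_i^{(k_i)}$; this yields the base case $\lambda^{y_i^{(k_i-1)}} = -(\Psi_{y_i^{(k_i)}} + \bm{\mu}^T \bm{g}_{y_i^{(k_i)}})$, which is exactly \eqref{eq:thm1} at $j = k_i - 1$. The costate dynamics $\dot{\lambda}^{y_i^{(j)}} = -\partial H / \partial y_i^{(j)}$ then supply the recursion: for $j \geq 1$ the variable $y_i^{(j)}$ enters $H$ both through $\Psi, \bm{g}$ and through the coupling term $\lambda^{y_i^{(j-1)}} y_i^{(j)}$, so $\dot{\lambda}^{y_i^{(j)}} = -(\Psi_{y_i^{(j)}} + \bm{\mu}^T \bm{g}_{y_i^{(j)}}) - \lambda^{y_i^{(j-1)}}$, which I would rearrange as $\lambda^{y_i^{(j-1)}} = -(\Psi_{y_i^{(j)}} + \bm{\mu}^T \bm{g}_{y_i^{(j)}}) - \tfrac{d}{dt}\lambda^{y_i^{(j)}}$.

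With the base case and this one-step recursion in hand, I would close the argument by downward induction from $j = k_i - 1$ to $j = 0$. Substituting the inductive hypothesis \eqref{eq:thm1} for $\lambda^{y_i^{(j)}}$ into the recursion, the total time derivative promotes every $\tfrac{d^{n-1}}{dt^{n-1}}$ to $\tfrac{d^{n}}{dt^{n}}$ and flips each sign; after reindexing $n \mapsto n+1$ the shifted sum merges with the stray undifferentiated term $-(\Psi_{y_i^{(j)}} + \bm{\mu}^T \bm{g}_{y_i^{(j)}})$ into a single sum running up to $k_i - (j-1)$, reproducing \eqref{eq:thm1} at index $j - 1$ and completing the induction.

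The main obstacle I anticipate is bookkeeping rather than deep analysis: I must track precisely how each state component enters the integrator coupling so the recursion carries the correct inhomogeneous term and off-by-one shift, and I must keep the $\tfrac{d}{dt}$ in the costate dynamics as a \emph{total} time derivative so that it commutes with the reindexing and yields the iterated operators $\tfrac{d^{n-1}}{dt^{n-1}}$ in \eqref{eq:thm1}. I would also note that the integrator structure fully decouples the chains across the index $i$, so no cross terms between distinct outputs arise and each chain can be handled independently, and that $\bm{\mu}$'s only footprint in \eqref{eq:thm1} is through the partials $\bm{g}_{y_i^{(j+n)}}$.
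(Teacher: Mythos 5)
Your proposal is correct and follows essentially the same route as the paper: construct the Hamiltonian for Problem \ref{prb:outputProblem}, use the stationarity condition $\partial H/\partial a_i = 0$ to obtain the top costate $\lambda^{y_i^{(k_i-1)}} = -(\Psi_{y_i^{(k_i)}} + \bm{\mu}^T\bm{g}_{y_i^{(k_i)}})$ as the base case, and then unwind the integrator-chain costate recursion $\lambda^{y_i^{(j-1)}} = -(\Psi_{y_i^{(j)}} + \bm{\mu}^T\bm{g}_{y_i^{(j)}}) - \tfrac{d}{dt}\lambda^{y_i^{(j)}}$ down to $j=0$. The paper exhibits the first two steps and asserts that repeated differentiation and substitution completes the argument, whereas you make the downward induction and the reindexing $n \mapsto n+1$ explicit, which is a slightly tidier presentation of the identical idea.
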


\begin{proof}

The Euler-Lagrange and optimality equations for \eqref{eq:hamiltonian} are,
\begin{align}
    -\dot{\bm{\lambda}}^T &= \Psi_{\bm{s}} + \bm{\lambda}^T \bm{I}_{\bm{s}} + \bm{\mu}^T\bm{g}_{\bm{s}}, \label{eq:lambdaDot} \\
    0 &=\Psi_{\bm{a}} + \bm{\lambda}^T \bm{I}_{\bm{a}} + \bm{\mu}^T\bm{g}_{\bm{a}}, \label{eq:a0} 
\end{align}
where the subscripts $\bm{a}$ and $\bm{s}$ correspond to partial derivatives with respect to those variables.
We simplify \eqref{eq:lambdaDot} by exploiting the integrator structure of $\bm{I}$ for each element of $\bm{s}(t)$. 

Note that, by construction,
\begin{align}
\bm{\lambda}^T \bm{I}_{\bm{s}} &= \big[
    0, \lambda^{y_1}, \dots, \lambda^{y_1^{(k_1-2)}}, \dots, 0, \lambda^{y_m}, \dots, \lambda^{y_m^{(k_m-2)}} \big], \\
\bm{\lambda}^T \bm{I}_{\bm{a}} &= \big[
    0, 0, \dots, \lambda^{y_1^{(k_1-1)}}, 0, 0, \dots, \lambda^{y_m^{(k_m-1)}} \big].
\end{align}
First we consider \eqref{eq:a0} for some base state $i\in\{1, 2,\dots,m\}$, which yields,
\begin{align} \label{eq:topLambda}
    0 = \Psi_{y_i^{(k_i)}} + \lambda^{y_i^{(k_i-1)}} + \bm{\mu}^T\bm{g}_{y_i^{(k_i)}},
\end{align}
which satisfies Theorem \ref{thm:costates} when $j = k_i-1$.
Next, for $j\in\{0,1,\dots,k_i-1\}$, \eqref{eq:lambdaDot} implies,
\begin{align}
    \dot{\lambda}^{y_i^{(j)}} &= -\Psi_{y_i^{(j)}} - \lambda^{y_i^{(j-1)}} - \bm{\mu}^T\bm{g}_{y_i^{(j)}}. \label{eq:lambdaDiff}
\end{align}

For the case that $j = k_i-1$, \eqref{eq:lambdaDiff} becomes,
\begin{equation} \label{eq:th1-step2}
    \dot{\lambda}^{y_i^{(k_i-1)}} = \Psi_{y_i^{(k_i-1)}} + \lambda^{y_i^{(k_i-2)}} - \bm{\mu}^T\bm{g}_{y_i^{(k_i-1)}}.
\end{equation}
Solving \eqref{eq:topLambda} for $\lambda^{y_i^{(k_i-1)}}$, taking its derivative, and substituting the result into \eqref{eq:th1-step2} satisfies Theorem \ref{thm:costates} for $j = k_i-2$.
Taking repeated time derivatives and substituting completes the proof of Theorem \ref{thm:costates}.
\end{proof}

Theorem \ref{thm:costates} could be interpreted as an alternative to the proof of separability presented in \cite{Chaplais2008InversionSystems}, however, our result is constructive and explicitly derives the costates as functions of state and control variables.
Furthermore, our result relies on differential flatness, rather than feedback linearization, and does not require affinity with respect to the control inputs in the system dynamics.
Furthermore, in the following subsections, we apply Theorem \ref{thm:costates} to generate the optimal constrained trajectory and boundary conditions as a function of the state and control variable.
This, to the best of our knowledge, has not been addressed to date.

\begin{remark}
For the unicycle system in Example \ref{exa:unicycle}, the costates are,
\begin{align}
    \bm{\lambda}^{\bm{y}} &= -\big( \psi_{\dot{\bm{y}}} + \bm{\mu}^T\bm{g}_{\dot{\bm{y}}} \big) + \frac{d}{dt} \big( \psi_{\bm{a}} + \bm{\mu}^T\bm{g}_{\bm{a}} \big), \\
    \bm{\lambda}^{\bm{y}^{(1)}} &= -\big( \psi_{\bm{a}} + \bm{\mu}^T\bm{g}_{\bm{a}} \big).
\end{align}
\end{remark}

Our next result comes from manipulating Theorem \ref{thm:costates} to eliminate the costate variables; this yields an equivalent optimality condition that is independent of the costates.

\begin{theorem} \label{thm:main}
The optimal trajectory for the system described in Problem \ref{prb:outputProblem} satisfies
\begin{equation} \label{eq:ode}
    \sum_{n=0}^{k_i} (-1)^n \frac{d^n}{dt^n} \Big( \Psi_{y_i^{(n)}} + \bm{\mu}^T\bm{g}_{y_i^{(n)}} \Big) = 0,
\end{equation}
for each integrator chain starting with the base state $y_i$, $i = 1, 2, \dots, m$.
\end{theorem}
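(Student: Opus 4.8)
The plan is to eliminate the base-state costate $\lambda^{y_i}$ entirely by combining the explicit formula from Theorem \ref{thm:costates} with the one Euler--Lagrange relation that its derivation did \emph{not} consume. Recall that Theorem \ref{thm:costates} was built by integrating the recursion \eqref{eq:lambdaDiff} upward from the algebraic optimality condition \eqref{eq:topLambda}, using only the cases $j \in \{1, \dots, k_i-1\}$. The remaining relation, the bottom equation \eqref{eq:bottomLambda} for $\dot{\lambda}^{y_i}$, was never invoked, and it is precisely the constraint I would exploit to close the loop.

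First I would specialize \eqref{eq:thm1} to the bottom of each integrator chain by setting $j = 0$, which gives
\begin{equation*}
    \lambda^{y_i} = \sum_{n=1}^{k_i} (-1)^n \frac{d^{n-1}}{dt^{n-1}} \big( \Psi_{y_i^{(n)}} + \bm{\mu}^T \bm{g}_{y_i^{(n)}} \big).
\end{equation*}
Differentiating both sides once in time raises each operator $\frac{d^{n-1}}{dt^{n-1}}$ to $\frac{d^{n}}{dt^{n}}$, so that $\dot{\lambda}^{y_i}$ becomes a sum indexed from $n=1$ to $n=k_i$ with summand $(-1)^n \frac{d^n}{dt^n}(\Psi_{y_i^{(n)}} + \bm{\mu}^T\bm{g}_{y_i^{(n)}})$.

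Next I would substitute this expression for $\dot{\lambda}^{y_i}$ into the bottom Euler--Lagrange equation \eqref{eq:bottomLambda}, whose right-hand side $-(\Psi_{y_i} + \bm{\mu}^T\bm{g}_{y_i})$ is exactly the $n=0$ term of the target sum in \eqref{eq:ode}. Moving that term across the equality folds it into the running index, extending the range to $n = 0, \dots, k_i$ and producing \eqref{eq:ode} directly. The computation is essentially a one-line reconciliation once the two ingredients are lined up.

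I expect the only genuine bookkeeping hazard to be the index shift under differentiation: the order of differentiation in \eqref{eq:thm1} is $n-1$, so the single time derivative supplies the ``missing'' unit that aligns each summand with $\frac{d^n}{dt^n}(\cdot)$. One must also confirm that no out-of-range costate $\lambda^{y_i^{(k_i)}}$ leaks out at the top of the chain; it does not, because the top is pinned by the algebraic condition \eqref{eq:topLambda} rather than by a further differential equation. The conceptual crux is therefore recognizing that the algebraic top \eqref{eq:topLambda} and the differential bottom \eqref{eq:bottomLambda} are the two independent endpoints of the integrator chain, and that closing them against the recursion is what collapses the costate dependence into the single scalar ODE \eqref{eq:ode}.
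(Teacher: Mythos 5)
Your proposal is correct and follows essentially the same route as the paper's own proof: specialize Theorem \ref{thm:costates} to $j=0$, differentiate once, and substitute the bottom Euler--Lagrange relation \eqref{eq:bottomLambda} to absorb the $n=0$ term into the sum. Your added observation that \eqref{eq:bottomLambda} is precisely the relation not consumed in deriving Theorem \ref{thm:costates} is a nice clarification but does not change the argument.
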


\begin{proof}
By Theorem \ref{thm:costates}, 
\begin{equation} \label{eq:baseCostateSum}
    \lambda^{y_i} = \sum_{n=1}^{k_i} (-1)^n \frac{d^{n-1}}{dt^{n-1}}\big( \Psi_{y_i^{(n)}} + \bm{\mu}^T\bm{g}_{y_i^{(n)}}  \big),
\end{equation}
while for $j=0$ \eqref{eq:lambdaDiff} implies,
\begin{equation} \label{eq:bottomLambda}
    \dot{\lambda}^{y_i} = -\Psi_{y_i} - \bm{\mu}^T\bm{g}_{y_i}.
\end{equation}
Taking the derivative of \eqref{eq:baseCostateSum} and substituting \eqref{eq:bottomLambda} yields,
\begin{align}
    \dot{\lambda}^{y_i} &= -\Psi_{y_i} - \bm{\mu}^T\bm{g}_{y_i} \notag\\
    &= \sum_{n=1}^{k_i} (-1)^n \frac{d^{n}}{dt^n}\big( \Psi_{y_i^{(n)}} + \bm{\mu}^T\bm{g}_{y_i^{(n)}} \big),
\end{align}
which proves Theorem \ref{thm:main}.
\end{proof}

Note that while we prove Theorem \ref{thm:main} for the flat space, the mapping \eqref{eq:flat2state} and \eqref{eq:flat2control} can be composed with \eqref{eq:ode} to generate an equivalent optimality condition in the original space.
Thus, the separation of states and costates is independent of the coordinate system, and is instead a fundamental property of differentially flat systems.

\begin{remark}
Applying Theorem \ref{thm:main} to Example \ref{exa:unicycle} yields the optimality equation,
\begin{align*}
    \big(\Psi_p + \bm{\mu}^T\bm{g}_p\big) 
    - \frac{d}{dt}\big(\Psi_v + \bm{\mu}^T\bm{g}_v\big) 
    + \frac{d^2}{dt^2} \big(\Psi_a + \bm{\mu}^T\bm{g}_a\big) = 0.
\end{align*}

Furthermore, the following (arbitrary) terminal cost, running cost, and constraints,
\begin{align*}
    \phi = \frac{1}{2}u_2(t)^2, \quad
    L = \frac{1}{2}u_1(t)^2, \quad 
    \hat{g} = \theta- \theta_{\max} \leq 0,
\end{align*}
become
\begin{align*}
    \Phi &= \frac{1}{2}\Big(\frac{\ddot{y}_2\dot{y}_1 - \dot{y}_2\ddot{y}_1}{\dot{y}_2^2+\dot{y}_1^2}\Big)^2, \quad
    \Psi = \frac{1}{2} \big(\dot{y}_1(t)^2 + \dot{y}_2^2\big) , \\
    g' &= \atantwo\big(\dot{y}_2(t), \dot{y}_1(t)\big) - \theta_{\max} \leq 0.
\end{align*}
The transformed constraint $g'$ is not an explicit function of the control variables $\ddot{y}_1$ or $\ddot{y}_2$.
We resolve this by taking a single derivative of the constraint, which we call $g\coloneqq \frac{d}{dt} g'$.
The new function $g$ is an explicit function of the control variables, and we take partial derivatives of it in the optimality equation--we discuss this step in further detail in Section \ref{sec:trajConst}.
Note that we have, in essence, moved the nonlinearities of the dynamics into the objectives and constraints.
\end{remark}

While Theorem \ref{thm:main} describes the evolution of the optimal state trajectory, one must also consider instantaneous jumps in the trajectory caused by constraint activations.
Consider a constraint vector $\bm{g}$ that has $c$ linearly independent rows, then $\bm{\mu}(t)$ is a $c\times1$ matrix.
When a constraint $g_i$, $i = 1, 2, \dots, c$ does not influence the system trajectory then $\mu_i(t) = 0$ by definition, otherwise $\mu_i(t) > 0$.
When $\bm{\mu} = \bm{0}$ the trajectory is said to follow a singular (unconstrained arc), and if any $\mu_i > 0$, then the trajectory is said to follow a regular (constrained) arc.
When the system switches between singular and regular arcs, the corresponding costates may switch instantaneously at the so-called \emph{constraint junction}.

We propose a new interpretation of this property, where the collection of singular and regular arcs constitute a set of \emph{optimal motion primitives}.
A vector of $c$ constraints implies at most $2^c$ different motion primitives, which can be automatically computed using Theorem \ref{thm:main} and the corresponding constraint equations.
In other words, Theorem \ref{thm:main} provides an \emph{optimal motion primitive generator}, which can be solved  numerically or analytically to derive every possible motion primitive. 

In this context, dealing with switching elements of $\bm{\mu}(t)$ is reduced to optimally switching between a finite set of motion primitives at unknown constraint junctions.
The standard approach of \cite{Bryson1975AppliedControl} derives optimality conditions that must be satisfied at each junction,
\begin{align}
    {\bm{\lambda}^-}^T &= {\bm{\lambda}^+}^T + \bm{\pi}^T \bm{N}_{\bm{s}}, \label{eq:lambdaJump}\\
    H^+ - H^- &=  \bm{\pi}^T \bm{N}_t, \label{eq:hamJump} \\
    \frac{\partial H^-}{\partial \bm{a}^-} &= \frac{\partial H^+}{\partial \bm{a}^+} = \bm{0}, \label{eq:hamU}
\end{align}
where the superscripts $-$ and $+$ denote the instant in time just before and just after the junction, respectively, $\bm{\pi}$ is a constant vector of Lagrange multipliers, $\bm{N}$ is a vector of tangency conditions, which we rigorously derive in the following subsections, and the subscripts $\bm{s}$ and $t$ correspond to partial derivatives with respect to the state and time.
In the following subsections, we employ Theorem \ref{thm:costates} to exhaustively write the jump conditions \eqref{eq:lambdaJump}--\eqref{eq:hamU} as explicit functions of the state and control variables.
This enables us to solve Problem \ref{prb:outputProblem} using only the state and control variables, which removes the numerical instabilities that are generally associated with nonlinear optimal control.

\subsection{Interior-Point Constraints} \label{sec:interior-const}

First, we will consider the case where a set of state and/or control values are imposed at a single time instant.
Let $\bm{h}\big(\bm{s}(t_1),  t_1\big) = 0$ describe an interior point constraint that is imposed at some time $t_1$.
We construct the tangency vector,
\begin{equation} \label{eq:IPtangent}
    \bm{N}\big(\bm{s}(t), t\big) = 
    \begin{bmatrix}
        \bm{h}\big(\bm{s}(t), t\big) \\
        t - t_1
    \end{bmatrix},
\end{equation}
which is necessary and sufficient for constraint satisfaction at $t_1$ when $\bm{N}\big( \bm{s}(t_1), t_1 \big) = \bm{0}$.
Note that if the time $t_1$ is unknown, then \eqref{eq:IPtangent} reduces to $\bm{N} = \bm{h}$.
To determine the optimal jump conditions, we substitute the tangency vector \eqref{eq:IPtangent} into the optimality equations \eqref{eq:lambdaJump} and \eqref{eq:hamJump}.
Applying Theorem \ref{thm:costates} to \eqref{eq:lambdaJump}--\eqref{eq:hamU} yields $\sum_{i=1}^m \{ k_i - 1\} + 1$ equations that determine the optimal change in $\bm{a}$ and its derivatives at $t_1$, and these equations are independent of the costate vectors.

Further manipulating \eqref{eq:lambdaJump}--\eqref{eq:hamU} yields a useful pair of equations that are amenable to finding an analytical solution.
First, we substitute \eqref{eq:hamiltonian} into \eqref{eq:hamJump} and use \eqref{eq:lambdaJump} to eliminate $\bm{\lambda}^-$,
\begin{align}
    (\Psi^+ - \Psi^-) &+ ({\bm{\mu}^+}^T\bm{g}^+ - {\bm{\mu}^-}^T\bm{g}^-)  \notag\\
    &+ {\bm{\lambda}^+}^T(\bm{I}^+ - \bm{I}^-) = \bm{\pi}^T\big(\bm{N}_t + \bm{N}_s \bm{I}^-\big). \label{eq:jump2}
\end{align}
Note that, by definition, $\bm{\mu}^T\bm{g} = 0$ along the optimal state-trajectory, thus we set those terms equal to zero.
Furthermore, the state trajectory is continuous under Assumption \ref{smp:bounded} and the integrator dynamics.
Thus,
\begin{equation}
    \bm{I}^+ - \bm{I}^- 
    = \begin{bmatrix}
    \bm{0} \\
    \bm{a}^+ - \bm{a}^-
    \end{bmatrix}.
\end{equation}
Applying Theorem \ref{thm:costates} to \eqref{eq:jump2} for the case $j = k_i-1$ and simplifying yields,
\begin{align}
     (\Psi^+ - \Psi^-) ~-&~ (\Psi_{\bm{a}} + {\bm{\mu}^T}\bm{g}_{\bm{a}})^- \cdot (\bm{a}^+ - \bm{a}^-) \notag\\
     &= \bm{\pi}^T\big(\bm{N}_t + \bm{N}_s \bm{I}^+\big). \label{eq:PsiJumpInterior2}
\end{align}
Following a similar process also implies,
\begin{align}
     (\Psi^+ - \Psi^-) ~-&~ (\Psi_{\bm{a}} + {\bm{\mu}^T}\bm{g}_{\bm{a}})^+ \cdot (\bm{a}^+ - \bm{a}^-) \notag\\
     &= \bm{\pi}^T\big(\bm{N}_t + \bm{N}_s \bm{I}^-\big). \label{eq:PsiJumpInterior1}
\end{align}

\subsection{Path Constraints} \label{sec:trajConst}

Next, we consider the case when path constraints on the state and/or control variables are imposed on Problem \ref{prb:outputProblem} and influence the trajectory of the system.
To generate our optimal motion primitive using Theorem \ref{thm:main}, we first need to ensure our constraints are functions of the state and control variables.
Let $h_i\big(\bm{s}(t), t\big) \leq 0$ denote the $i = 1, 2, ..., c$ state or control constraints.
Note that $h_i$ is not required to be an explicit function of the control input.
Under the standard approach of \cite{Bryson1975AppliedControl}, we require that $h_i$ is at least $q_i-$times differentiable, where $q_i$ is the minimum number of derivatives required for any component of the control input to appear in $\frac{d^{q_i}}{dt^{q_i}} h_i$.
To guarantee satisfaction of the original constraint $h_i$, we construct the tangency vector,
\begin{equation} \label{eq:tangency}
    \bm{N}_i(\bm{s}(t), t) \coloneqq
    \begin{bmatrix}
        h_i\big(\bm{s}(t), t\big) \\
        h_i^{(1)}\big(\bm{s}(t), t\big) \\
        \vdots \\
        h_i^{(q_i-1)}\big(\bm{s}(t), t\big)
    \end{bmatrix},
\end{equation}
and define the constraint,
\begin{equation}
    g_i\big(\bm{s}(t), \bm{a}(t), t\big) \coloneqq h_i^{(q_i)}\big(\bm{s}(t), \bm{a}(t), t\big).
\end{equation}
Thus, whenever $h_i\big(\bm{s}(t), t\big) = 0$ over a non-zero interval, we impose $\bm{N}_i\big(\bm{s}(t), t\big) = 0$ and $\bm{g}_i\big(\bm{s}(t), \bm{a}(t)\big) = 0$ over the interior of the interval; this satisfies the original constraint under Assumption \ref{smp:bounded} \citep{Bryson1975AppliedControl}.
Note that, if $h_i$ is a function of the control variable, $q = 0$ and $\bm{N}_i$ is empty.
Furthermore, if the constraint is active over a zero-length interval, the problem reduces to the analysis in Section \ref{sec:interior-const} with an unknown activation time.

Finally, to construct the tangency matrix for the $c$ constraints, we construct the stacked tangency vector,
\begin{equation}
    \bm{N}\big(\bm{s}(t), t\big) = \begin{bmatrix}
        \bm{N}_{1}\big(\bm{s}(t), t\big) \\
        \bm{N}_{2}\big(\bm{s}(t), t\big) \\
        \vdots \\
        \bm{N}_{c}\big(\bm{s}(t), t\big)
    \end{bmatrix},
\end{equation}
which accounts for all of the constraints that may influence the state and control trajectory.
As with the previous section, \eqref{eq:lambdaJump}--\eqref{eq:hamU} determine the required instantaneous change in the control variables and their derivatives for an optimal trajectory.

Again, further manipulating \eqref{eq:lambdaJump}--\eqref{eq:hamU} yields a pair of useful equations.
Note that, by construction,
\begin{align}
    \bm{\pi}^T\dot{\bm{N}}^+ = 0,
\end{align}
as $\bm{N}_i = \bm{0}$ and $\bm{g}_i^+ = 0$ when constraint $i$ is active, and the corresponding $\bm{\pi}_i = 0$ otherwise.
Thus, taking the full derivative implies
\begin{align}
 \bm{\pi}^T \dot{\bm{N}}^+ = \bm{\pi}^T \Big( \bm{N}_t + \bm{N}_{\bm{s}}\cdot\bm{I}^+ \Big) = \bm{0}.
\end{align}
Thus, applying \eqref{eq:PsiJumpInterior2} at the end of a constrained motion primitive yields
\begin{equation}
    (\Psi^+ - \Psi^-) - (\Psi_{\bm{a}} + \bm{\mu}^T\bm{g}_{\bm{a}})^- \cdot (\bm{a}^+ - \bm{a}^-) = 0. \label{eq:PsiJumpArc2}
\end{equation}
This leads directly to our next result,
\begin{corollary} \label{cor:unconstrained}
If the system exits from or enters to an unconstrained motion primitive, the optimal control input satisfies
\begin{align}
    \Psi^+ - \Psi^- - \Psi_{\bm{a}}^- (a^+ - a^-) &= 0, \text{ or } \\
    \Psi^+ - \Psi^- - \Psi_{\bm{a}}^+ (a^+ - a^-) &= 0, \text{ respectively}.
\end{align}
\end{corollary}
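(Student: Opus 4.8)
The plan is to obtain both identities of Corollary~\ref{cor:unconstrained} directly from the two jump relations \eqref{eq:PsiJumpInterior2} and \eqref{eq:PsiJumpInterior1}, exploiting the defining property of a singular arc: on any unconstrained motion primitive every inequality multiplier vanishes, so $\bm{\mu}=\bm{0}$ and hence $\bm{\mu}^T\bm{g}_{\bm{a}}=\bm{0}$ there. The key observation is that ``exiting from'' and ``entering to'' an unconstrained primitive correspond, respectively, to the unconstrained condition holding on the $-$ side and on the $+$ side of the junction. Each case therefore annihilates the $\bm{\mu}^T\bm{g}_{\bm{a}}$ contribution on exactly one side, collapsing the bracketed coefficient to the bare running-cost sensitivity $\Psi_{\bm{a}}$ evaluated on that same side.

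For the exit case I would begin from \eqref{eq:PsiJumpArc2}, which already incorporates the vanishing of $\bm{\pi}^T(\bm{N}_t + \bm{N}_{\bm{s}}\bm{I}^+)$ established in Section~\ref{sec:trajConst}; this identity is available here because the succeeding arc is constrained, so the constraint is active on the $+$ side and $\bm{\pi}^T\dot{\bm{N}}^+=\bm{0}$. Since the preceding arc is unconstrained, $\bm{\mu}^-=\bm{0}$, so $({\bm{\mu}^T}\bm{g}_{\bm{a}})^-=\bm{0}$ and the coefficient $(\Psi_{\bm{a}}+\bm{\mu}^T\bm{g}_{\bm{a}})^-$ reduces to $\Psi_{\bm{a}}^-$. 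This immediately yields the first equation $\Psi^+ - \Psi^- - \Psi_{\bm{a}}^-(\bm{a}^+ - \bm{a}^-)=0$.

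For the entry case I would mirror the argument on the opposite side. Now the constraint is active just before the junction, so by the same reasoning used to derive $\bm{\pi}^T\dot{\bm{N}}^+=\bm{0}$ I would establish its symmetric counterpart $\bm{\pi}^T\dot{\bm{N}}^- = \bm{\pi}^T(\bm{N}_t + \bm{N}_{\bm{s}}\bm{I}^-)=\bm{0}$, since each active row satisfies $\bm{N}_i=\bm{0}$ and $g_i^-=0$ while $\bm{\pi}_i=0$ on the inactive rows. This kills the right-hand side of \eqref{eq:PsiJumpInterior1}. Because the succeeding arc is unconstrained, $\bm{\mu}^+=\bm{0}$, so $({\bm{\mu}^T}\bm{g}_{\bm{a}})^+=\bm{0}$ and the coefficient $(\Psi_{\bm{a}}+\bm{\mu}^T\bm{g}_{\bm{a}})^+$ reduces to $\Psi_{\bm{a}}^+$, giving the second equation $\Psi^+ - \Psi^- - \Psi_{\bm{a}}^+(\bm{a}^+ - \bm{a}^-)=0$.

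The only delicate point is the entry case, where I must justify $\bm{\pi}^T(\bm{N}_t + \bm{N}_{\bm{s}}\bm{I}^-)=\bm{0}$ rather than merely reusing the $+$-side identity baked into \eqref{eq:PsiJumpArc2}. I expect this to be routine, since the tangency rows and the multiplier support are symmetric under exchange of the superscripts, and continuity of $\bm{N}$ up to the junction ensures the active rows vanish from either side. The substance of the corollary is thus simply that a vanishing $\bm{\mu}$ on the unconstrained side removes the constraint's contribution to the Hamiltonian jump, leaving $\Psi_{\bm{a}}$ on that side as the sole sensitivity coupling the cost jump to the control jump.
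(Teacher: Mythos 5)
Your proof is correct and follows essentially the same route as the paper's: both cases reduce to the jump relations \eqref{eq:PsiJumpInterior2}--\eqref{eq:PsiJumpInterior1} (equivalently \eqref{eq:PsiJumpArc2}) with $\bm{\mu}$ vanishing on whichever side of the junction is unconstrained. The only divergence is in the entry case, where the paper simply asserts $\bm{\pi}=\bm{0}$ at the exit of a constrained arc while you instead derive the symmetric identity $\bm{\pi}^T\dot{\bm{N}}^-=\bm{0}$ to kill the right-hand side of \eqref{eq:PsiJumpInterior1}; your version is slightly more self-contained and correctly identifies \eqref{eq:PsiJumpInterior1} (the paper's citation of \eqref{eq:PsiJumpInterior2} there appears to be a label slip, since that equation carries the $(\cdot)^-$ coefficient rather than the needed $(\cdot)^+$).
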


\begin{proof}
When the system exits from an unconstrained motion primitive, $\bm{\mu}^- = \bm{0}$ and the result follows by \eqref{eq:PsiJumpArc2}.
When the system enters an unconstrained motion primitive, $\bm{\mu}^+ = \bm{0}$ and $\bm{\pi} = \bm{0}$; the result follows by \eqref{eq:PsiJumpInterior2}.
\end{proof}

\begin{corollary}\label{cor:continuity}
    If the objective function has the form $\Psi = f(\bm{s}(t)) + ||\bm{a}(t)||^2$, then the control input $\bm{a}(t)$ is always continuous when the system enters or exits an unconstrained motion primitive.
\end{corollary}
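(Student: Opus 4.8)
The plan is to exploit the two algebraic identities furnished by Corollary \ref{cor:unconstrained} and to show that, for a quadratic-in-control running cost, each of them collapses into a perfect square in the control jump $\bm{a}^+ - \bm{a}^-$. First I would record the two facts that drive the argument. The state trajectory is continuous across any constraint junction (already established from Assumption \ref{smp:bounded} and the integrator dynamics), so $\bm{s}^+ = \bm{s}^-$ and therefore the state-dependent part of the cost satisfies $f(\bm{s}^+) = f(\bm{s}^-)$. Moreover, for $\Psi = f(\bm{s}) + \|\bm{a}\|^2$ the control gradient is simply $\Psi_{\bm{a}} = 2\bm{a}^T$.

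Next I would compute the cost jump. Since the state is continuous, the $f(\bm{s})$ term cancels and $\Psi^+ - \Psi^- = \|\bm{a}^+\|^2 - \|\bm{a}^-\|^2$. Substituting this together with $\Psi_{\bm{a}}^- = 2(\bm{a}^-)^T$ into the exit condition of Corollary \ref{cor:unconstrained} produces
\[
    \|\bm{a}^+\|^2 - \|\bm{a}^-\|^2 - 2(\bm{a}^-)^T(\bm{a}^+ - \bm{a}^-) = 0.
\]
The key step is to recognize that the left-hand side is exactly $\|\bm{a}^+ - \bm{a}^-\|^2$, which forces $\bm{a}^+ = \bm{a}^-$. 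The entry case is symmetric: using $\Psi_{\bm{a}}^+ = 2(\bm{a}^+)^T$ the same substitution yields $-\|\bm{a}^+ - \bm{a}^-\|^2 = 0$, again giving $\bm{a}^+ = \bm{a}^-$.

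The argument is essentially one algebraic manipulation, so there is no serious obstacle; the only point requiring care is the completion of the square, and in particular tracking the sign and the factor of two introduced by differentiating $\|\bm{a}\|^2$. I would close by noting that the identity holds irrespective of which side of the junction the unconstrained arc lies on, so the control is continuous both when the system enters and when it exits an unconstrained motion primitive, as claimed.
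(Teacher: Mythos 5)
Your proposal is correct and follows exactly the route the paper intends: the paper's own proof is a one-line appeal to Corollary \ref{cor:unconstrained} and continuity of $\bm{s}(t)$, and your completion-of-the-square computation $\Psi^+ - \Psi^- - \Psi_{\bm{a}}^{\mp}(\bm{a}^+ - \bm{a}^-) = \pm\|\bm{a}^+ - \bm{a}^-\|^2 = 0$ is precisely the algebra the authors leave implicit. No gap; you have simply written out what the paper calls trivial.
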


\begin{proof}
    The proof follows trivially from Corollary \ref{cor:unconstrained} and continuity in $\bm{s}(t)$ from Assumption \ref{smp:bounded}.    
\end{proof}
\subsection{Boundary Conditions} \label{sec:boundaries}

The results of Sections \ref{sec:interior-const} and \ref{sec:trajConst} completely describe the evolution of the system if the boundary conditions are known.
Next, we extend this result to the case that a boundary condition is unspecified by applying Theorem \ref{thm:costates}.

\begin{corollary} \label{cor:freeBC}
Let the state $y_i^{(j)}(t)$ for $i \in \{1, 2, \dots, m\}$ and $j \in \{ 0, 1, 2, \dots, k_i-1\}$ be unspecified at a boundary, i.e., it can be arbitrarily selected.
There exists an equivalent boundary condition that guarantees optimality of the system trajectory.
\end{corollary}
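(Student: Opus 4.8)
The plan is to recall the standard natural (transversality) boundary condition of optimal control for a free state component and then eliminate the costate that appears in it using Theorem~\ref{thm:costates}. When the component $y_i^{(j)}$ is left free at the final time $t^f$, the transversality condition of \cite{Bryson1975AppliedControl} requires the associated costate to match the terminal-cost sensitivity, i.e.
\begin{equation*}
    \lambda^{y_i^{(j)}}(t^f) = \Phi_{y_i^{(j)}}\big(\bm{s}(t^f), \bm{a}(t^f)\big).
\end{equation*}
Symmetrically, if $y_i^{(j)}$ is free at the initial time $t^0$, the corresponding condition reads $\lambda^{y_i^{(j)}}(t^0) = 0$ when no initial cost is present (and equals the negative gradient of any initial cost otherwise). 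Freeing a single component thus activates exactly one scalar condition on the matching costate.

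First I would invoke Theorem~\ref{thm:costates}, which expresses each costate $\lambda^{y_i^{(j)}}$ explicitly through \eqref{eq:thm1} as a finite sum of time derivatives of $\Psi_{y_i^{(j+n)}} + \bm{\mu}^T\bm{g}_{y_i^{(j+n)}}$, a quantity depending only on the state and control variables. Substituting \eqref{eq:thm1} into the transversality condition and evaluating at the boundary yields, for the free-final-state case,
\begin{equation*}
    \sum_{n=1}^{k_i-j} (-1)^n \frac{d^{n-1}}{dt^{n-1}}\big( \Psi_{y_i^{(j+n)}} + \bm{\mu}^T\bm{g}_{y_i^{(j+n)}} \big)\Big|_{t^f} = \Phi_{y_i^{(j)}}\big(\bm{s}(t^f), \bm{a}(t^f)\big),
\end{equation*}
and analogously with the right-hand side replaced by $0$ at the initial boundary. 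This is precisely the claimed equivalent boundary condition: it is phrased purely in the state and control variables, it replaces the missing specification of $y_i^{(j)}$, and by construction it holds if and only if the original transversality condition holds, hence if and only if the trajectory is optimal.

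The remaining work is bookkeeping. I would verify that the natural boundary condition applies component-wise to the dynamically extended state $\bm{s}$ of Definition~\ref{def:outputSpace}, so that freeing $y_i^{(j)}$ selects exactly the $j$-th transversality equation of chain $i$, and I would track the sign convention consistently with the Hamiltonian \eqref{eq:hamiltonian} (the costate term at $t^0$ entering integration by parts with opposite sign to that at $t^f$). I would also note that, since \eqref{eq:thm1} introduces up to $k_i-j-1$ time derivatives of $\Psi$ and $\bm{g}$, the resulting relation constrains higher-order derivatives of the flat output at the boundary; these are automatically compatible with any remaining specified data because each free component contributes one independent algebraic relation. I expect the main obstacle to be purely notational, namely keeping the indexing of the extended-state transversality conditions aligned with the costate formula \eqref{eq:thm1} and confirming that $\Phi_{y_i^{(j)}}$ is expressed in the flat coordinates of Problem~\ref{prb:outputProblem}; once Theorem~\ref{thm:costates} is in hand, the substitution itself is immediate.
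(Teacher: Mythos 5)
Your proposal is correct and follows essentially the same route as the paper: invoke the standard transversality condition for the free state component and substitute the costate expression \eqref{eq:thm1} from Theorem \ref{thm:costates} to obtain a boundary condition phrased purely in the state and control variables. Your version is in fact slightly more careful than the paper's, which writes the right-hand side as zero and thereby implicitly assumes the terminal cost $\Phi$ does not depend on the free component $y_i^{(j)}$, whereas you retain the $\Phi_{y_i^{(j)}}$ term and also treat the initial-time case explicitly.
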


\begin{proof}
Without loss of generality, let the state variable $y_i^{(j)}(t)$ be undefined at the final time $t^f$.
Under the standard approach \cite{Bryson1975AppliedControl}, the corresponding boundary condition $\lambda^{y_i^{(j)}}(t^f) = 0$ is required to guarantee optimality.
Thus, by Theorem \ref{thm:costates},
\begin{equation}
     \sum_{n=1}^{k_i - j} (-1)^n \frac{d^{n-1}}{dt^{n-1}} \big( \Psi_{y_i^{(j+n)}} + \bm{\mu}^T \bm{g}_{y_i^{(j+n)}} \big)\Big|_{t^f} = 0
\end{equation}
is an equivalent boundary condition.
\end{proof}

In practice, it is likely that Problem \ref{prb:outputProblem} will have boundary conditions defined by functions of the state variables.
Without loss of generality, let $\bm{B}(\bm{s}(t^f), t^f) = 0$ describe the functional constraints at $t^f$.
This implies that
\begin{align}
    \bm{\lambda}^T(t^f) &= \Bigg( \frac{\partial \Phi}{\partial \bm{s}} + \bm{\nu}\frac{\partial \bm{B}}{\partial \bm{s}}  \Bigg)_{t = t^f}, \label{eq:functionBC} \\
    \bm{B}(\bm{s}(t^f), t^f) &= \bm{0},
\end{align}
where $\bm{\nu}$ is a constant Lagrange multiplier that guarantees constraint satisfaction (see: \cite{Bryson1975AppliedControl}).
Applying Theorem \ref{thm:costates} to \eqref{eq:functionBC} results in a system of equations that guarantees constraint satisfaction at the boundaries, which ensures that Problem \ref{prb:outputProblem} has the correct number of initial and final conditions.

Finally, it's possible that the boundary conditions are described at an unknown terminal time.
In this case, the optimal terminal time $t^f$ satisfies \citep{Bryson1975AppliedControl}
\begin{equation}
   \Omega = \Bigg[ \frac{\partial \Phi}{\partial t} + \bm{\nu}\frac{\partial \bm{B}}{\partial t} 
   + \Big(\frac{\partial \Phi}{\partial \bm{s}} + \bm{\nu}^T\frac{\partial \bm{B}}{\partial \bm{s}} \Big)\bm{I} + \Psi \Bigg]_{t=t^f} = 0. \label{eq:tfUnknown}
\end{equation}
Thus, Problem \ref{prb:outputProblem} always corresponds to a two-point boundary value problem with $m$ initial conditions and $m$ final conditions that are independent of the costates.
Next, we present a numerical example for generating the trajectory of a double-integrator system in real time.

\section{\blue{Robotic Manipulator Case Study}} \label{sec:case-study}

To demonstrate the effectiveness of our approach, we consider \blue{the motion planning problem for a planar serial manipulator with two revolute joints, which we refer to as `the manipulator.'
In particular, we derive the optimal trajectory for the pick-and-place problem.
}
Note that, to improve readability, we omit the explicit dependence of variables on time where it does not cause ambiguity.
\blue{
We use the standard model for our manipulator, which is depicted in Fig. \ref{fig:manipulator}.
}

\begin{figure}[ht]
    \centering
    \begin{tikzpicture}
        \node (axis) at (5,0) {};
        \node (prime) at (3,3) {};
        \draw[thick,->] (0, 0) -- (5, 0) node[midway,below] {$x$};
        \draw[thick,->] (0, 0) -- (0, 5) node[midway,left] {$y$};
        \node[fill=black,circle,inner sep=2pt,outer sep=0pt]
        (O) at (0,0) {};
        \node[fill=black,circle,inner sep=2pt,outer sep=0pt]
        (A) at (2,2) {};
        \node[fill=black,circle,inner sep=2pt,outer sep=0pt]
        (B) at (2,4) {};
        \draw[ultra thick] (O) -- (A) node[midway,above] {$l_1$};
        \draw[ultra thick] (A) -- (B) node[midway,left] {$l_2$} node[above] {$\bm{p}$};
        \draw[ultra thick,dashed] (A) -- (prime);
        \pic [draw, ->, "$\theta_1$",angle radius=1.5cm,angle eccentricity=1.2] {angle = axis--O--A};
        \pic [draw, ->, "$\theta_2$",angle radius=1.2cm,angle eccentricity=1.2] {angle = prime--A--B};
    \end{tikzpicture}
    \caption{\blue{A 2-link serial manipulator with 2 revolute joints.}}
    \label{fig:manipulator}
\end{figure}

\blue{
The state space $\bm{x} = [\theta_1, \theta_2, \dot{\theta}_1, \dot{\theta}_2]^{\intercal}$ corresponds to the joint space of the manipulator, and the action space $u = [\tau_1, \tau_2]^{\intercal}$ is the torque applied at each angle.
The manipulator's dynamics are given by,
}
\begin{equation} \label{eq:manip-dynamics}
    \blue{\bm{\tau} = D(\bm{\theta})\ddot{\bm{\theta}} + C(\bm{\theta}, \dot{\bm{\theta}})\dot{\bm{\theta}} + G(\bm{\theta})},
\end{equation}
\blue{
where $\bm{\theta} = [\theta_1, \theta_2]^{\intercal}$, $D$ is the inertial matrix, $C$ is the Coriolis matrix, and $G$ is the gravitational matrix (see \cite{spong2020robot} for further details).
}

\blue{
In this case study we consider a pick-and-place task, i.e., we seek to plan a trajectory for the grasper located at point $\bm{p}$.
The system is under-actuated; we have two control inputs, namely, the two joint torques applied to $\theta_1$ and $\theta_2$.
However, we have three states of interest: the Cartesian position of the grasper at point $\bm{p}$ and its orientation.
For pick-and-place, our variable of interest is the grasper position $\bm{p}$, and the manipulator satisfies the definition of differential flatness with $\bm{p}$ as the flat output variable.
In fact, the diffeomorphism from the joint to the state space is exactly the forward and inverse kinematics.
We also note that the inverse kinematics for the manipulator are non-unique and contain a singularity when $\theta_2 = K\pi$ for any integer $K$.
In the sequel we demonstrate that thse singularity points can be included as interior point constraints per Section \ref{sec:interior-const}--which we can either impose or avoid as part of our optimal control problem.
}

\blue{
First, we write the grasper position as an explicit function of the state variables using the forward kinematics,}
\begin{equation} \label{eq:manip-kinematics}
\blue{
    \bm{p} = 
    \begin{bmatrix}
        p_x \\ p_y
    \end{bmatrix}
    =
    l_1 \begin{bmatrix}
        \cos(\theta_1) \\ \sin(\theta_1)
    \end{bmatrix}
    + l_2 \begin{bmatrix}
        \cos(\theta_1 + \theta_2) \\ \sin(\theta_1 + \theta_2).
    \end{bmatrix}.
}
\end{equation}
\blue{
The joint angles can also be written as an explicit function of the output variables using the inverse kinematics \citep{spong2020robot}, }
\blue{
\begin{align} \label{eq:manip-inverse}
    \theta_2 &= \pm \cos^{-1}\Big(p_x^2 + p_y^2 - l_1^2 - l_2^2, \, 2\, l_1 l_2 \Big),\\
    \theta_1 &= \atantwo\Big(p_y,p_x\Big) - \atantwo\Big(l_2\sin(\theta_2), l_1 + l_2\cos(\theta_2)\Big). \notag
\end{align}
}
\blue{
Finally, composing the inverse dynamics \eqref{eq:manip-inverse} and its derivatives with the dynamics \eqref{eq:manip-dynamics} yields the control input $\bm{\tau}$ as an explicit function of the position $\bm{p}$.
Thus, the forward and inverse kinematics of the serial manipulator are exactly the diffeomorphisms of Definition \ref{def:flat}.
The resulting flat state and action space is,
}
\begin{equation}
\blue{
        \bm{s} = \begin{bmatrix}
            \bm{p} \\ \dot{\bm{p}}
        \end{bmatrix}, \quad
        \bm{a} = \ddot{\bm{p}}.
    }
\end{equation}

\blue{
Next, for the pick-and-place task, we seek to bring the manipulator from its current state at time $t=0$ and position the grasper at a desired position at some later time $T > 0$, i.e.,}
\begin{equation}
\blue{
\begin{aligned} \label{eq:manip-boundaries}
    \bm{p}(0) &= 
    l_1\begin{bmatrix}
       \cos(\theta_1) \\ \sin(\theta_1) 
    \end{bmatrix} + 
    l_2 \begin{bmatrix}
        \cos(\theta_1 + \theta_2) \\ \sin(\theta_1 + \theta_2)
    \end{bmatrix}, \\
    \dot{\bm{p}}(0) &=  \frac{d}{dt}\bm{p}(t=0), \\
    \bm{p}(T) &= \bm{p}^f, \\
    \dot{\bm{p}}(T) &= \bm{0}.
\end{aligned}
}
\end{equation}

\blue{
Note that the inverse kinematics \eqref{eq:manip-inverse} are non-unique.
Thus, any position $\bm{p}(t)$ that is non-singular at time $t$ can correspond to a `left' or `right' bend in the elbow at $\theta_2$.
We refer to these as the two `modes' of the manipulator.
The initial mode at time $t=0$ is determined by the initial state state; the final mode at time $t=T$ can be selected to influence the final orientation of the grasper.
If the initial and final modes differ, then the grasper must enter a singular configuration at some time $t_1 \in (0, T)$, i.e.,
\begin{equation} \label{eq:singular}
\begin{aligned}
    ||\bm{p}(t_1)||^2 &= (l_1 + l_2)^2, \text{ or}\\
    ||\bm{p}(t_1)||^2 &= (l_1 - l_2)^2.
\end{aligned}
\end{equation}
Thus, may we include \eqref{eq:singular} as an interior point constraint with an unknown time as per Section \ref{sec:interior-const} when the initial and final modes are distinct.
Finally, to ensure Assumption \ref{smp:existence} is satisfied, we must constrain the grasper to remain within the manipulator's workspace, i.e., }
\begin{align}
\blue{
    ||\bm{p}||^2 - (l_1 + l_2)^2 \leq 0,} \label{eq:workspace1}\\
    \blue{(l_1 - l_2)^2 - ||\bm{p}||^2 \leq 0,} \label{eq:workspace2}
\end{align}
\blue{
which coincidentally coencides with the singular configuration of this manipulator.
}

\blue{
To summarize, our approach enables us to formulate the optimal manipulator trajectory planning problem as a kinematic particle with workspace bounds \eqref{eq:workspace1}, \eqref{eq:workspace2}.
We can switch between `left' and `right' bending modes with the interior point constraint \eqref{eq:singular} if the initial and final modes are distinct, or we can constrain the manipulator to avoid singular configurations.
}

\blue{
Finally, for brevity of our analysis, we present an optimization problem that minimizies the $\mathcal{L}^2$ norm of the grasper's acceleration; this minimizes the magnitude of the force that the grasper must apply during the pick-and-place operation.
For more complex objectives, e.g., minimizing the total joint torque, the objective function must be written as an explicit function of $\bm{p}$ and any number of its derivatives using \eqref{eq:manip-inverse}.
While this may be challenging analytically, it is trivial to achieve using automatic differentiation, e.g., with Maple, Matlab, or Autodiff.
Our final optimal control problem is
}
\begin{align*}
    \min_{\ddot{\bm{a}}}  \frac{1}{2} \int_{0}^{T} \frac{1}{2} ||\bm{a}||^2 &dt \\
    \text{subject to:} & \\
    \text{integrator dynamics }& \ddot{\bm{p}} = \bm{a}, \\
    \text{initial conditions }& \eqref{eq:manip-boundaries}, \\
    \text{mode switching constraint }& \eqref{eq:singular}, \\
    \text{workspace constraints }& \eqref{eq:workspace1}, \eqref{eq:workspace2},
\end{align*}
\blue{
where the mode switching constraint is neglected if the initial and final configurations share the same mode.}

\blue{
\textbf{Optimal Motion Primitives:} 
We employ Theorem \ref{thm:main} to generate an ordinary differential equation that is sufficient for optimality,
\begin{equation} \label{eq:ex-ode}
\blue{
   \bm{\ddot{a}} + 2 \mu_i \bm{p} - 2 \mu_o \bm{p} = \bm{0}, }
\end{equation}
where $\mu_i$ and $\mu_o$ are the time-varying Lagrange multipliers corresponding to the inner and outer bounds of the workspace in \eqref{eq:workspace1} and \eqref{eq:workspace1}, respectively.
Both constraints cannot be active simultaneously, thus there are only three motion primitives:}
\begin{enumerate}
    \item \blue{Unconstrained motion, $\mu_i = \mu_o = 0$.}
    \item \blue{Inner constraint, $\mu_i \geq 0$ and $||\bm{p}|| = l_1 - l_2$.}
    \item \blue{Outer constraint, $\mu_o \geq 0$ and $||\bm{p}|| = l_1 + l_2$.}
\end{enumerate}
\blue{
The optimal trajectory is a piecewise combination of these three cases.
We construct the dynamical motion primitives from \eqref{eq:ex-ode} with the orthonormal unit vectors $\hat{\bm{p}}$ and $\hat{\bm{t}}$, which are parallel and perpandicular to the position vector $\bm{p}$, respectively.
The resulting motion primitives are, }
\begin{align}
    \ddot{\bm{a}} = \bm{0} \quad&\text{(unconstrained)} \\
\begin{aligned}
    \ddot{\bm{a}} \cdot \hat{\bm{p}} + 2(l_1 - l_2)\mu_i(t) = 0 \\
    \ddot{\bm{a}} \cdot \hat{\bm{t}} = 0
\end{aligned} \quad&(\text{inner constrained}) \\
    \begin{aligned}
        \ddot{\bm{a}} \cdot \hat{\bm{p}} + 2(l_1 + l_2)\mu_o(t) = 0 \\
        \ddot{\bm{a}} \cdot \hat{\bm{t}} = 0
    \end{aligned} \quad&(\text{outer constraint})
\end{align}
\blue{
Each dynamical motion primitive has an analytic solution,
}
\begin{align}
    \bm{a}(t) = \bm{c}_1 t + \bm{c}_2 \quad&\text{(unconstrained)}, \\
\begin{aligned}
    \bm{a}(t)\cdot\hat{\bm{p}} = \frac{\bm{v}^2}{r}\\
    \ddot{\bm{a}} \cdot \hat{\bm{t}} = 0
\end{aligned} \quad&(\text{constrained}).
\end{align}
\blue{
where $r = (l_1 + l_2)$ for the outer constraint and $r = (l_1 - l_2)$ for the inner constraint.
}

To avoid unnecessary complexity in this example, we introduce an additional assumption for this case study.
\begin{assumption} \label{smp:ex-assumption}
    The boundary conditions satisfy \blue{$(l_1 - l_2) < ||\bm{p}(t)|| < (l_1 + l_2)$, and the constraints bounding $p(t)$ are active only instantaneously.}
\end{assumption}
We only employ Assumption \ref{smp:ex-assumption} for brevity; the implication is that the optimal trajectory consists of an unknown number of unconstrained arcs connected with interior point constraints.
We have found this constraint to be reasonable for energy-minimizing systems that start and stop at rest, e.g., see \cite{Beaver2023IFAC}.

\textbf{Switching Conditions}:
Under Assumption \ref{smp:ex-assumption}, the optimal solution is a piecewise collection of \blue{unconstrained optimal motion primitives connected at junction points.
The unconstrained optimal trajectory is a system of $8$ equations and $8$ unknowns, which are the boundary conditions \eqref{eq:manip-boundaries} and $8$ unknown constants of integration for the optimal motion primitives, i.e., }
\begin{equation} \label{eq:unconstrained}
\begin{aligned}
    \bm{p} &= \bm{c}_3 t^3 + \bm{c}_2 t^2 + \bm{c}_1 t + \bm{c}_0, \\
    \bm{v} & = 3 \bm{c}_3 t^2 + 2 \bm{c}_2 t + \bm{c_1}, \\
    \bm{u} & = 6 \bm{c}_3 t + 2 \bm{c}_2
\end{aligned}
\end{equation}

In particular, the initial and final conditions are captured by a set of linear equations
\begin{align}
    A(0) \bm{c}_0 = \bm{b}_0, \\
    A(T) \bm{c}_f = \bm{b}_f,
\end{align}
where $A(0) \bm{c}_0$ and $A(T) \bm{c}_f$ denote the initial and final unconstrained trajectory segments \eqref{eq:unconstrained} evaluated at $t=0$ and $t=T$, respectively.
The vectors $\bm{c}_0$ and $\bm{c}_f$ contain the constants of integration for the initial and final unconstrained motion primitives, and $\bm{b}_0, \bm{b}_f$ are the initial and final conditions.
In the case that the unconstrained trajectory is feasible, $\bm{c}_0 = \bm{c}_f$ and the system consists of a single unconstrained arc.

\blue{
If the unconstrained trajectory is infeasible, or the initial and final modes of the manipulator are distinct, then the trajectory must transition to a singular configuration where either the inner or outer workspace constraint becomes active.
Under Assumption \ref{smp:ex-assumption}, this implies that there is only a single junction, and that it is an interior point constraint at an unknown time $t_1$.
Following Section \ref{sec:interior-const}, we first write the tangency vector with an unknown activation time,}
\begin{equation}
    \blue{N(s(t), t) = (l_1 - l_2)^2 - ||\bm{p}||^2.}
\end{equation}
\blue{
The tangency condition is satisfied by definition when $\theta_2 = \pi$; this allows us to write thie tangency condition in an equivalent form that is linear in $\bm{p}$.
We achieve this by parameterizing the point $\bm{p}$ with the unknown angle $\theta_1$,
\begin{equation}
    \bm{p}(t_1) = (l_1 - l_2) \begin{bmatrix}
        \cos(\theta_1) \\ \sin(\theta_1)
    \end{bmatrix}.
\end{equation}
}

\blue{
Next, using Theorem \ref{thm:costates} to rewrite the costates yields,}
\begin{align}
    \bm{\lambda}^v &= -\bm{a} - 2 \mu_i\bm{p} \\
    \bm{\lambda}^p &= \dot{\bm{a}} +2 \dot{\mu}_i \bm{p} - 2 \mu_i \bm{v}.
\end{align}
\blue{
Substituting these into jump in the costates \eqref{eq:lambdaJump} yields,}
\begin{align}
    \dot{\bm{a}}^+ + 2\dot{\mu}_i^+\bm{p} - 2\mu_i^+\bm{v} &= \dot{\bm{a}}^- + 2\dot{\mu}_i^-\bm{p} - 2\mu_i^-\bm{v}
    - 2\pi\bm{p}, \label{eq:ex-jump-1}\\
    -\bm{a}^+ - 2\mu_i^+\bm{p}  &= -\bm{a}^- - 2\mu_i^-\bm{p}. \label{eq:ex-jump-2}
\end{align}
\blue{
To complete our analysis take advantage of two facts,
\begin{itemize}
    \item The quantity $\bm{p}\cdot\bm{v} = 0$ in the singular configuration; this can be trivially verified using \eqref{eq:manip-kinematics}.
    \item Although $\mu_i(t)$ is problematic to evaluate at $t_1$, it is equal to zero in an open set around $t_1$; thus we take $\mu_i^- = \mu_i^+$.
\end{itemize}
}
\blue{
Thus, taking the dot product of \eqref{eq:ex-jump-1} and \eqref{eq:ex-jump-2} with $\bm{v}$ and cancelling yields,
}
\begin{align}
    \big(\dot{\bm{a}}^+ - \dot{\bm{a}}^-\big)\cdot\bm{v} &= 0, \\
    \big(\bm{a}^- - \bm{a}^+\big) &= 0.
\end{align}
\blue{
This implies continuity in the control input and the quantity $\dot{\bm{a}}\cdot\bm{v}$ at $t_1$
Thus, the optimality conditions at each junction are,}
\begin{enumerate}
    \item Continuity in the state at $t_1$: 4 equations.
    \item Tangency condition: 2 equations, 1 unknown $\theta_1$.
    \item $\bm{p}\cdot\bm{v} = 0$ at $t_1$: 1 equation.
    \item Continuity in the control input at $t_1$: 2 equations.
    \item Continuity in $\bm{a}\cdot\bm{v}$ at $t_1$: 1 equation.
\end{enumerate}

\blue{
Next, note that splitting one unconstrained arc with a junction yields $10$ unknowns ($8$ new trajectory coefficients $1$ unknown time, and the unknown parameter $\theta_1$) that we solve using the above $10$ equations.
Conditions 1, 2, and 4 are bilinear.
Thus, if we fix a time $t_1$ and angle $\theta_1$ for the junction, we can write the trajectory coefficients in the linear form,
\begin{equation} \label{eq:bilinear}
    A(t_1)\bm{c} = \bm{b}(\theta_1),
\end{equation}
where $A(t_1)$ is a square $8\times16$ matrix, $\bm{c}$ is a $16\times1$ vector containing the trajectory coefficients for both segments, and $\bm{b}(\theta_1)$ is an $8\times1$ vector that encodes the continuity and tangency conditions.
Thus, we combine \eqref{eq:bilinear} with the $8$ boundary conditions \eqref{eq:manip-boundaries} to form a block-diagonal square matrix to calculate the \textbf{optimal trajectory} for a given $t_1, \theta_1$.
Finally, we solve for the optimal values of $t_1$ and $\theta_1$ using an off-the-shelf least-squares method.
In particular, we solve}
\begin{align}
    \bm{p}(\theta_1)\cdot\bm{v}(t_1) &= 0 \label{eq:root1} \\
    \bm{a}(t_1^-)\cdot\bm{v}(t_1^-) - \bm{a}(t_1^+, \theta_1)\cdot\bm{v}(t_1^+, \theta_1) &= 0. \label{eq:root2}
\end{align}
\blue{
Note that $\bm{p}, \bm{v}, \bm{a}$ are cubic, quadratic, and linear polynomials defined by the optimal motion primitive \eqref{eq:unconstrained}.
}

\subsection{\blue{Result}}
\blue{
To demonstrate how our analytic closed-form solution to the optimal motion planning works, consider the serial manipulator of Fig. \ref{fig:manipulator} with the following parameters:
\begin{itemize}
    \item $l_1 = 3$ m, $l_2 = 2$ m
    \item $\theta_1(0) = \frac{\pi}{4}$, $\theta_2(0) = \frac{7\pi}{8}$,
    \item $\dot{\theta}_1(0) = 0, \dot{\theta}_2 = 0$
    \item $\bm{p}(T) = [-2, -3]^{\intercal}$, $\dot{\bm{p}}(T) = \bm{0}$
\end{itemize}
We also wish to have the manipulator switch modes, starting with the `left' bend configuration and ending in the `right' bend configuration.
}
\blue{
First, we calculate $\bm{p}(T)$ using \eqref{eq:manip-kinematics}.
Then, we write the boundary conditions \eqref{eq:manip-boundaries} in matrix form,
\begin{equation} \label{eq:bc-matrix}
    \begin{bmatrix}
        0 & 0 & 0 & 1 \\
        0 & 0 & 1 & 0 \\
        T^3 & T^2 & T & 1 \\
        3 T^2 & 2 T & 1 & 0
    \end{bmatrix}
    \otimes I_{2\times2}
    \begin{bmatrix}
    \bm{c}_1 \\ \bm{c}_2 \\ \bm{c}_3 \\ \bm{c}_4
    \end{bmatrix}
    = 
    \begin{bmatrix}
        \bm{p}(0) \\ \bm{0} \\ \bm{p}(T) \\ \bm{0}
    \end{bmatrix},
\end{equation}
where $\otimes$ is the Kronecker product and $I_{2\times2}$ is the $2\times2$ identity matrix.
This analytical expression for the trajectory coefficients yields the optimal unconstrained solution.
However, the resulting trajectory is infeasible as demonstrated in Fig. \ref{fig:manipulator-result}, namely, the grasper position $\bm{p}$ violates the condition $||\bm{p}|| \geq (l_1 - l_2)$.
}

\begin{figure}[h]
    \centering
    \includegraphics[width=0.8\linewidth]{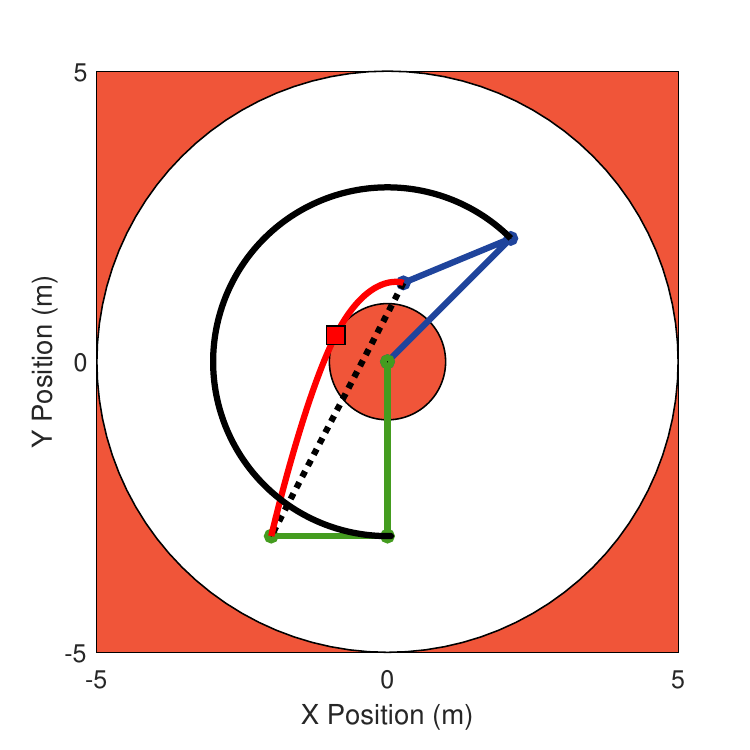}
    \caption{\blue{Initial (blue) and final (green) manipulator configuration. The unconstrained solution (dashed black), optimal solution (red line), junction (red square), and elbow trajectory (black line) are marked.}}
    \label{fig:manipulator-result}
\end{figure}

\blue{
Next, we construct the optimal trajectory from two segments, and we impose the constraint $||\bm{p}|| = (l_1 - l_2)$ as an interior constraint at some unknown time $t_1$.
If the resulting trajectory is feasible, then under Assumption \ref{smp:ex-assumption} the trajectory is also optimal.
Furthermore, this enables us to switch from the `left' to the `right' mode at the singular point.
We generate the optimal trajectory by constructing the block-diagonal matrix,
\begin{equation}
    \begin{bmatrix}
        A(0) && \bm{0} \\
        & A_C(t_1) &  \\
        \bm{0} && A(T) \\
    \end{bmatrix}
    \bm{c}
    =
    \bm{b}(\theta_1),
\end{equation}
where $A(0)$ and $A(T)$ are the boundary conditions \eqref{eq:bc-matrix}, $A_C(t_1)$ captures the bilinear continuity conditions at the unknown time $t_1$, and $\bm{0}$ is an appropriately sized zero matrix.
The vector $\bm{c}$ contains the coefficients for both trajectory segments, and $\bm{b}(\theta_1)$ encodes the continuity and tangency conditions for a given value of $\theta_1$ at the junction.
Finally, to determine the optimal time $t_1$ and angle $\theta_1$ for the junction, we solve the remaining two nonlinear equations, \eqref{eq:root1} and \eqref{eq:root2} using nonlinear least squares.
The resulting trajectory is demonstrated in Fig. \ref{fig:manipulator-result}; we note that the mean computational time required to generate the optimal trajectory is $3.5$ ms averaged over $1,000$ trials.
}

\blue{
The trajectory of the manipulator, including the joint angle trajectories, grasper acceleration, and torque applied at each joint are presented in Fig. \ref{fig:manipulator-plots}.
Note that we calculated the joint torques by taking numerical derivatives of the joint angles $\theta_1, \theta_2$ and smoothing them with a $100$ ms moving average window.
We used a mass of $0.25$ kg and a gravitational acceleration of $0$ m/s$^2$ to model a lightweight arm operating perpendicular to gravity; we computed the torque directly using \eqref{eq:manip-dynamics}.
}

\begin{figure*}[t]
    \centering
    \includegraphics[width=0.3\linewidth]{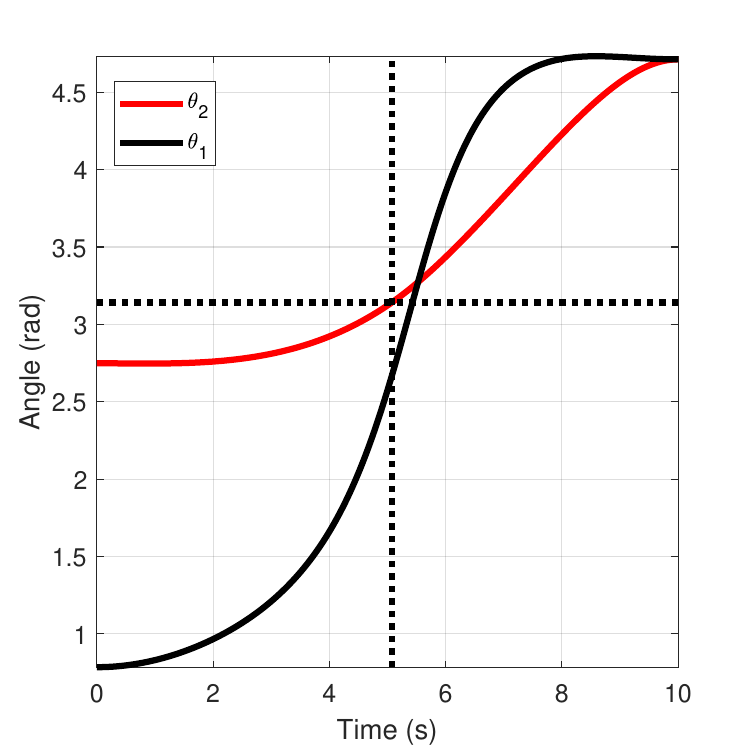}
    \includegraphics[width=0.3\linewidth]{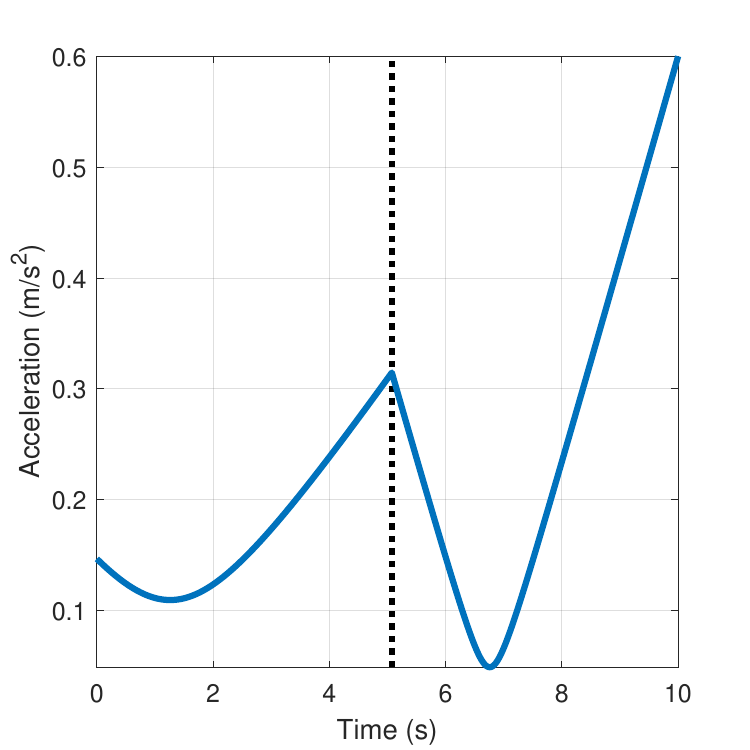}
    \includegraphics[width=0.3\linewidth]{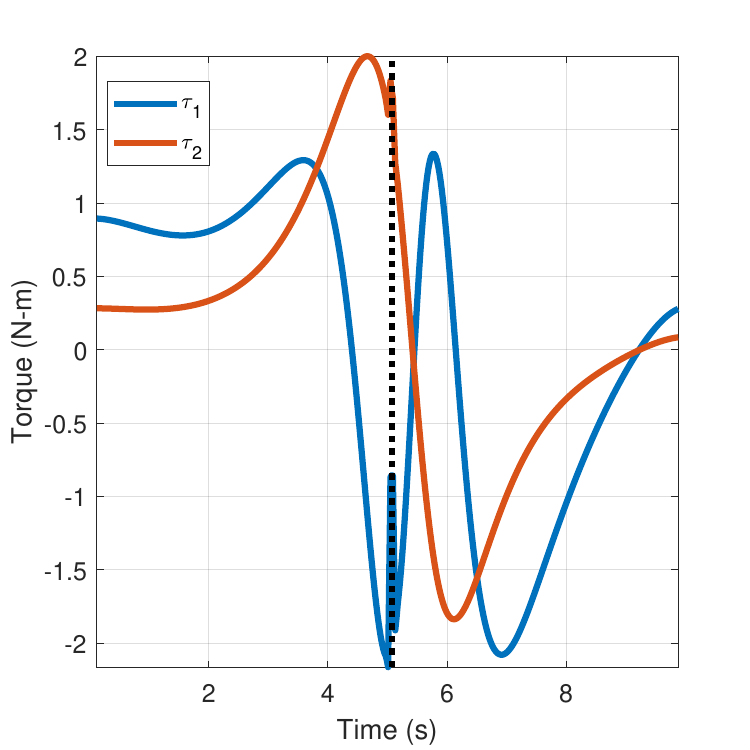}
    \caption{\blue{Plots showing the joint angles, grasper acceleration, and joint torque as a function of time. The dashed black lines denote the time that the manipulator switches from the `left' to the `right' mode at the singularity.}}
    \label{fig:manipulator-plots}
\end{figure*}

 \blue{
The smooth motion of the manipulator is clear from the joint angle and acceleration plots of Fig. \ref{fig:manipulator-plots}.
The junction occurs at approximately $t_1=5s$ with an angle of $\theta_1 = 2.7$ radians; the joint angles change gradually throughout the entire motion.
The grasper is brought toward the base of the manipulator before the junction, and it is moved away from the manipulator afterward--this leads to the corner in the acceleration magnitude that coincides with the singularity.
Finally, the torque at each joint is smooth and continuous, with only minor disturbances occurring at the singularity.
}

\section{Conclusion} \label{sec:conclusion}

In this paper, we proposed a technique to easily generate optimal trajectories for differentially flat systems.
First, we derived an explicit ordinary differential equation that describes the optimal state evolution independently of the costates. 
Second, we applied the result of Theorem \ref{thm:costates} to derive additional boundary conditions for the flat system, which has not been presented in the literature to the best of our knowledge.
Third, we proposed a motion primitive generator in Theorem \ref{thm:main} and derived the conditions to optimally switch between different motion primitives.
Finally, we applied our results in an illustrative case study, to generate \blue{smooth motion that minimizes the acceleration of a gripper for a pick-and-place operation}.
We were able to generate trajectories on the order of milliseconds, and guarantee satisfaction of the boundary conditions while \blue{respecting the worspace constraints and switching from a `left' to a `right' mode}.
Furthermore, this illustrative example is a concrete implementation of the theoretical contributions of this article.

There are several intriguing directions for future work.
First, it is practical, for given dynamics, to determine what objective functions guarantee that an analytical solution to \eqref{eq:ode} exists.
Another potential direction for future research is to relax Assumptions \ref{smp:existence} and \ref{smp:bounded} and derive similar results for systems with singularities and unbounded actuation capabilities.
Exploring problems with a large number of constraints, such as motion planning in cluttered environments, is another practical direction.
Finally, developing a general-purpose numerical method to formulate and solve optimization problems for differentially flat systems would be a valuable contribution.

\section*{Acknowledgements}

The authors would like to thank Chris Kroninger and Michael Dorothy at DEVCOM Army Research Laboratory for their insightful technical discussions.

\bibliography{mendeley, IDS_Pubs, examples, rebuttal}


\begin{wrapfigure}{l}{0.10\textwidth}
\centering
\includegraphics[width=0.13\textwidth]{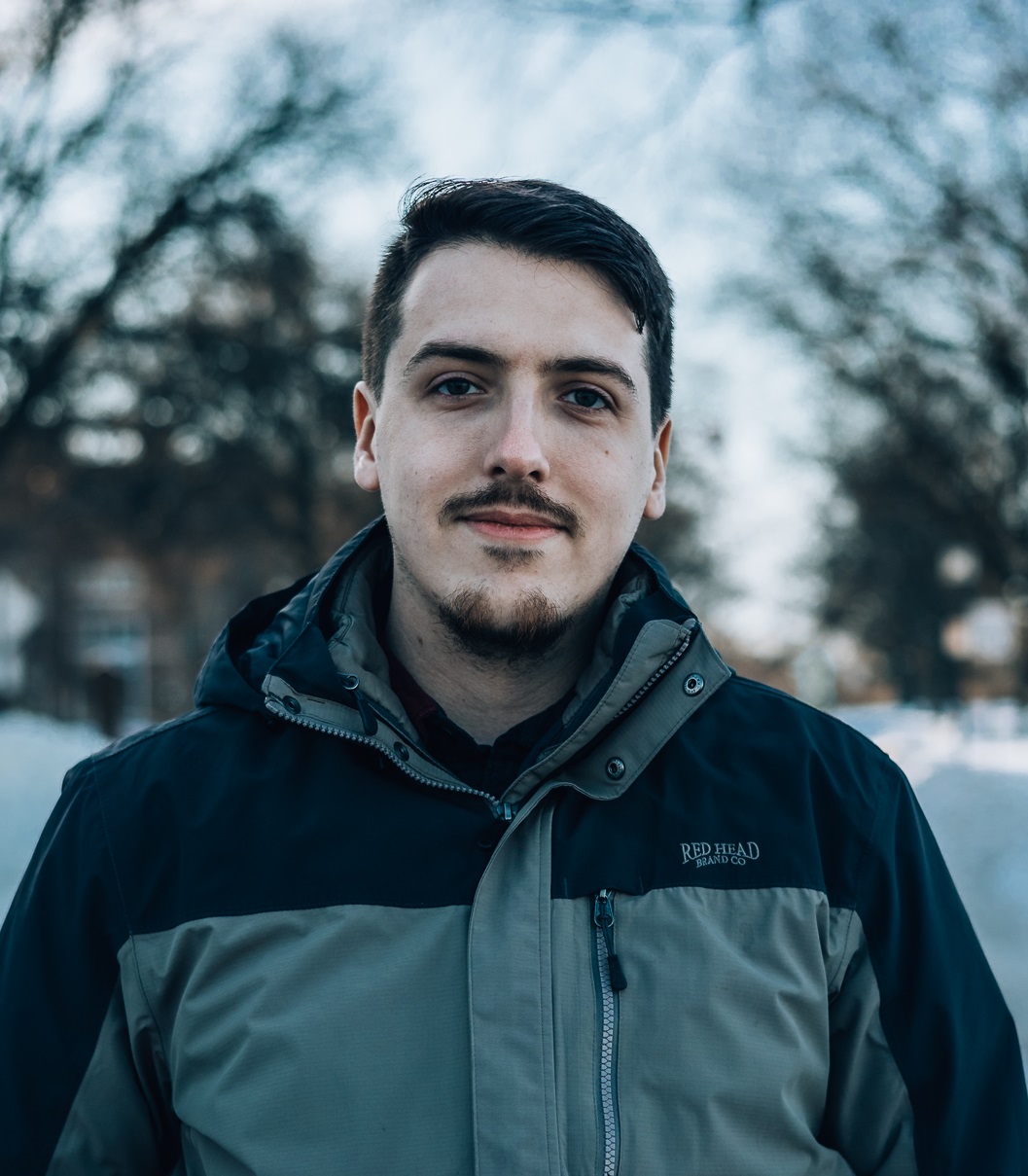}
\end{wrapfigure}
\textbf{Logan Beaver} received the B.S. degree in mechanical engineering from the Milwaukee School of Engineering, Milwaukee, WI, USA, in 2015, the M.S. degree in mechanical engineering from Marquette University, Milwaukee, WI, USA, in 2017, and the Ph.D. degree in mechanical engineering from the University of Delaware, Newrk, DE, USA in 2022.
He was a postdoc with the Division of Systems Engineering at Boston University from 2022--2023, and is currently an assistant professor of autonomous systems in the Department of Mechanical and Aerospace Engineering at Old Dominion University, Nrofolk, VA, USA.
His research interests are at the interface of complex systems, decentralized control, and optimization; his focus is on engineering decentralized robotic systems that take advantage of the mechanism of emergence. He is a member of the IEEE, SIAM ASME, and AAAS.

\par
\begin{wrapfigure}{l}{0.10\textwidth}
\centering
\includegraphics[width=0.13\textwidth]{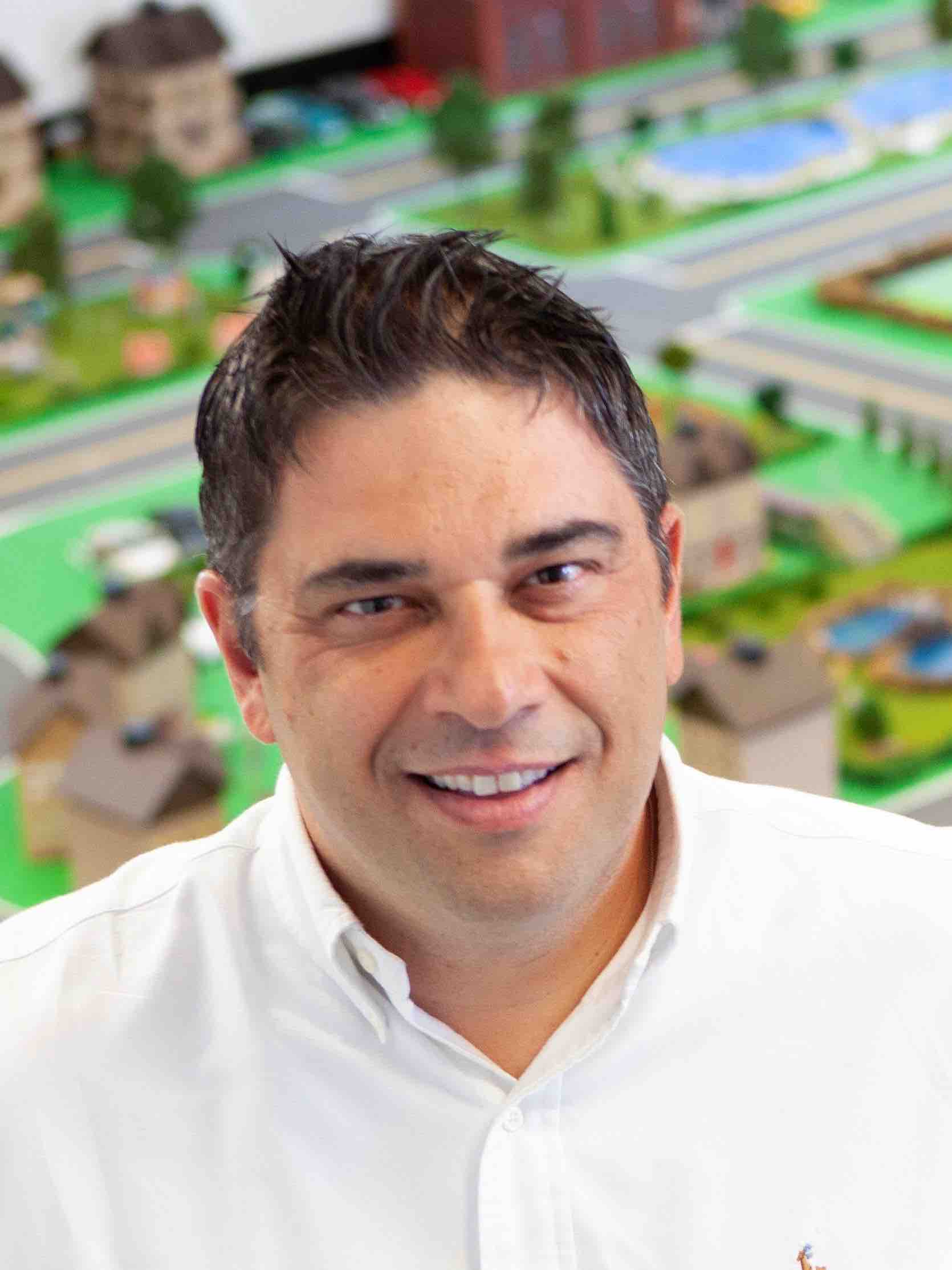}
\end{wrapfigure}
\textbf{Andreas A. Malikopoulos} received the Diploma in mechanical engineering from the National Technical University of Athens, Greece, in 2000. He received M.S. and Ph.D. degrees from the department of mechanical engineering at the University of Michigan, Ann Arbor, Michigan, USA, in 2004 and 2008, respectively. He is a Professor in the School of Civil and Environmental Engineering at Cornell University and the Director of the Information and Decision Science (IDS) Laboratory.
Prior to these appointments, he was the Director of the Sociotechnical Systems Center at the University of Delaware, the Deputy Director and the Lead of the Sustainable Mobility Theme of the Urban Dynamics Institute at Oak Ridge National Laboratory, and a Senior Researcher with General Motors Global Research \& Development. His research spans several fields, including analysis, optimization, and control of cyber-physical systems; decentralized systems; stochastic scheduling and resource allocation problems; and learning in complex systems. The emphasis is on applications related to smart cities, emerging mobility systems, and sociotechnical systems. He is an Associate Editor of Automatica and IEEE Transactions on Automatic Control. He is a member of SIAM, AAAS. He is also a Senior Member of IEEE, and a Fellow of the ASME.

\end{document}